\documentclass{article}

\usepackage{microtype}
\usepackage{graphicx}
\usepackage{subcaption}
\usepackage{caption}
\usepackage{booktabs}

\usepackage{hyperref}

\usepackage[preprint]{styles/icml/icml2026}

\usepackage{amsmath}
\usepackage{amssymb}
\usepackage{mathtools}
\usepackage{amsthm}
\usepackage{rotating}

\usepackage{tabularx}
\usepackage{siunitx}
\newcolumntype{Y}{>{\small\ttfamily\raggedright\arraybackslash}X}

\usepackage[capitalize,noabbrev]{cleveref}

\newcommand{\red}[1]{\textcolor{red}{#1}}
\newcommand{\blue}[1]{\textcolor{blue}{#1}}

\theoremstyle{plain}
\newtheorem{theorem}{Theorem}[section]

\newtheorem{lemma}[theorem]{Lemma}
\newtheorem{corollary}[theorem]{Corollary}
\theoremstyle{definition}

\newtheorem{assumption}[theorem]{Assumption}
\theoremstyle{remark}
\newtheorem{remark}[theorem]{Remark}

\usepackage[textsize=tiny]{todonotes}

\icmltitlerunning{Harnessing Non-Adversarial  Robustness in Large Language Models}

\begin{document}

\twocolumn[
  \icmltitle{Harnessing Non-Adversarial Robustness in Large Language Models}

  \icmlsetsymbol{equal}{*}

  \begin{icmlauthorlist}
    \icmlauthor{Qinghua Zhou}{qust,comp}
    \icmlauthor{Ellina Aleshina}{yyy}
    \icmlauthor{Andrey Lovyagin}{yyy}
    \icmlauthor{Oleg Somov}{airi,mipt}
    \icmlauthor{Mikhail Seleznyov}{airi,yyy}
    \icmlauthor{Alexander Panchenko}{airi,yyy}
    \icmlauthor{Ivan Oseledets}{airi}
    \icmlauthor{Elena Tutubalina}{airi}
    \icmlauthor{Ivan Y. Tyukin}{yyy,qust,comp}
  \end{icmlauthorlist}

  \icmlaffiliation{yyy}{Applied AI Institute, Moscow, Russia}
  \icmlaffiliation{comp}{King's College London, London, UK}
  \icmlaffiliation{airi}{AXXX, Moscow, Russia}
  \icmlaffiliation{mipt}{MIRIAI, Moscow, Russia}
  \icmlaffiliation{qust}{International Joint Laboratory of AI for Industry, QUST, Qingdao, China}

  \icmlcorrespondingauthor{Ivan Tyukin}{ivan.tyukin@kcl.ac.uk}

  \icmlkeywords{Machine Learning, ICML}

  \vskip 0.3in
]

\printAffiliationsAndNotice{}  

\begin{abstract}

The work presents an approach for addressing the challenge of robustness in Large Language Models (LLMs) to alterations and potential errors caused by semantically similar but textually different prompts. Recent works have shown that these kinds of prompt variations can significantly impact the performance of LLMs on tasks. The central question is: can LLMs' robustness to semantically-neutral prompt alterations be acquired without expensive retraining of the entire model? We address this question both theoretically and through experiments. Our theoretical analysis reveals a crucial factor impacting model robustness -- a systematic expected shift or perturbation-induced bias in neural network module outputs. Motivated by this analysis, we show that robustness can be achieved via a simple fine-tuning process: debiasing for robustness. We identify conditions when debiasing helps and when it does not, and demonstrate, through both theory and extensive experiments, that debiasing for robustness may indeed be a quick and efficient tool to enhance robustness and provide certification against random prompt perturbations.

\end{abstract}

\section{Introduction}

Robustness of LLMs with respect to prompt perturbations is a standard requirements for modern LLMs. A large body of work on the topic of ensuring robustness has already been developed to date (see \citet{kumar2025robustness} for an overview). Most of these approaches are based on targeted and appropriately designed training algorithms \cite{yu2024robust,anthony2009neural,kim2021lipschitz}, similar to the standard approaches in other areas such as computer vision \cite{xie2020adversarial}. Whilst these methods broadly apply to a large number of models, the sheer size of LLMs prevents broad application of these methods  due to the volumes of computational resources and training data needed to implement these measures. 

Alternatives such as post-training certificates provided by, e.g., randomized smoothing \cite{cohen2019certified, zeng2023certified} require multiple inference runs per prompt and may be computationally expensive. Moreover, they   
may not warrant high accuracy as their performance, by the very definition of the randomized smoothing, is the expected performance of the original model on perturbed inputs. If the latter is poor, then so is that of the stabilized model. 

Remedies such as self-denoising \cite{zhang2023certified} demonstrate potential to improve accuracy in experiments, but they retain high computational burden needed to implement these tools.  Moreover, the application of all these methods is typically constrained to classification tasks and may not be immediately applicable to other tasks. 

{\it The question, therefore, is: if and when improvements of LLMs' performance can be achieved via small and possibly task-dependent modifications of the model, without the need for significant computational budgets and supervisory ground truth information?}

In this work, we propose a novel holistic approach to improve both robustness and accuracy of LLMs, relative to their original performance on perturbed data. 
The approach is grounded in the insights stemming from the theoretical analysis of certified robustness of LLMs that accounts for major traditional determinants of robustness, such as Lipschitz constants and margins, as well as relevant statistical features of models operating in perturbed environments. The approach, in principle, is not confined to classification tasks and does not require full model retraining, albeit it enables the inclusion of fine-tuning via LoRA \cite{hu2022lora}, in-context learning~\cite{dong-etal-2024-survey}, or other standard methods into the proposed workflow. 

At the heart of our approach is the intuition that nonlinearities inherent in LLMs may be exploited to navigate around the robustness-accuracy-retraining dilemma without spending excessive computational resources to achieve appropriate levels of robustness (in contrast to  Lipschitz or large margin-constrained training \cite{pethick2025training, huang2018large}).  
The advantage gained from a somewhat more nuanced accounting for model nonlinearities in robustness certificates, as opposed to nonlinearity-agnostic certificates (randomized smoothing), is illustrated in Figure \ref{fig:visualization}. In Figure \ref{fig:visualization}, $M(x)$ stands for the feature vector corresponding to some input data/prompt $x$. Naive intuition defines a robustness margin for $x$ as the distance from $M(x)$ to a decision boundary. However, if we consider the expectation of $M(x+\delta x)$, where $\delta x$ is a random perturbation, then the value of $E_{\delta x}[M(x+\delta x)]$ may be far away from $M(x)$ (see Figure \ref{fig:ni_measuremens} in Appendix \ref{sec:appendix-simple} for the evidence of this occurring in LLMs). 
Hence "typical" behavior of models without and under perturbations could be radically different. This includes distances to decision boundaries and hence robustness. Therefore, taming the change in expected performance of perturbed models could be associated with an appropriate compensation for the shifts induced by both the perturbations themselves and nonlinearities in LLMs.

In the setting of harnessing nominal, non-adversarial robustness, the main contributions of our work are:
\begin{itemize}
    \item  We identify a critical robustness factor for LLMs -- a shift in expected statistical properties of the model's latent features arising due to external perturbations. 
    \item We provide  robustness certificates linking major known determinants of robustness, such as sparsity and Lipschitz constants, into unified expressions for both simple and complex noise models.
    \item Guided by the theoretical analysis, we present simple and efficient debiasing methods to remedy negative impact of random perturbations in the absence of task supervisory information, and show these methods also improve certification.
\end{itemize}

\begin{figure}[h]
    \centering
    \includegraphics[width=0.5\linewidth]{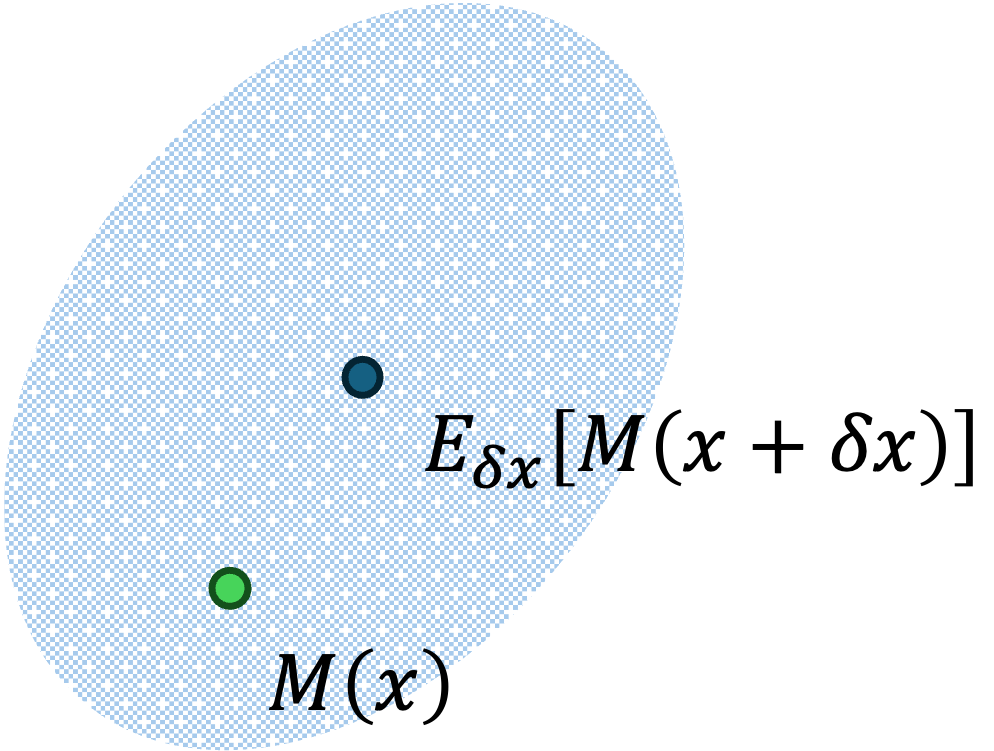}
    \caption{Geometric intuition behind \emph{perturbation-induced bias}: there is no guarantee that the expectation of perturbed inputs representations' center will coincide with the unperturbed input, i.e. $E_{\delta x}[M(x+\delta x)]\neq M(x)$.}
    \label{fig:visualization}
\end{figure}

The paper is organized as follows. In Section \ref{sec:prior} we review relevant literature and formulate specific research questions, Section 
\ref{sec:theory} describes the overall theoretical framework. Sections \ref{sec:experiment-setup}, \ref{sec:interventions} present new algorithms and their empirical verification. In Section \ref{sec:discussion} we provide discussions in a broader context and highlight limitations. Section \ref{sec:conclusion} concludes the paper.

\section{Prior work}\label{sec:prior}

Ensuring robustness to perturbations of input data has been the topic of major effort in the past. The most relevant approaches for the current work are discussed below.

{\it Large margin classifiers} \cite{sokolic2017robust, anthony2009neural} are widely used to create both accurate and robust models. A related approach is {\it adversarial training} \cite{xhonneux2405efficient} whereby robustness is imposed via appropriately chosen training loss and model training protocols.  These approaches, however, require supervisory ground truth data and availability of major computational resources for training, a task which is substantially different from the one we are considering here. 

{\it Spectral normalization}, proposed in \citet{miyato2018spectral}, enables controlling the spectral norm of weight matrices in the model. This has the potential to ensure that the overall Lipschitz constant of the model is kept small, typically less than one. It has been shown in \citet{bartlett2017spectrally} (Theorem 1.1) that small spectral norms imply better robustness guarantees. Yet, similar to large margin classifiers, the method assumes complete model retraining -- the pathway we are striving to avoid. Moreover, some mappings within LLMs may not necessarily be Lipschitz everywhere \cite{dasoulas2021lipschitz}.  

{\it Batch calibration}, introduced in \citet{zhou2023batch}, is a post-training technique designed to control contextual bias from a batched prompt. The method, in essence, estimates empirical contextual bias from a batch of prompts and then applies the adjustment during inference. \citet{seleznyov2025punctuation} showed that the method can, in principle, be used to address robustness to semantically-neutral prompt perturbations. It does not, however, yield robustness certificates.

In our paper, we reveal that a similar effect can be achieved through direct modifications of model parameters. This (i) eliminates the need for external corrections at the stage of class label predictions and (ii) creates the potential for applications other than mere classification tasks. Moreover, our approach leads to testable robustness certificates.

{\it Randomized smoothing} \cite{cohen2019certified} and its LLM-adapted versions with prompt masking \cite{zeng2023certified} and template ensembling \cite{voronov2024mind} are popular post-hoc approaches to warrant worst-case robustness. They are applicable to generic models, regardless of their architecture. Yet, whilst delivering robustness guarantees, these methods are computationally expensive as they assume multiple inference passes per single prompt. Moreover, they apply largely to classification tasks. Our aim is to remove these obstacles through inexpensive parameter tuning.

{\it Low-Rank adapters (LoRA) and fine-tuning}. A recent empirical study \cite{seleznyov2025punctuation} thoroughly assessed several methods for increasing robustness of LLMs to semantically-neutral perturbations across $52$ tasks from the Natural Instructions dataset \cite{supernaturalinstructions}. This study looked at a host of robustness-enhancing approaches such as few-shot  learning, LoRA, sensitivity-aware decoding \cite{lu2024prompts}, and template ensembling \cite{voronov2024mind}. The authors observed that naive application of LoRA, including LoRA variants with prompt augmentation, did not result in consistent improvements in robustness. 
Moreover, in cases when LoRA adapters or other fine-tuning improve robustness, they still require access to supervisory ground truth information about expected tokens or labels and do not produce \textit{a priori} computable robustness certificates. 

This raises the following research questions:
 \begin{itemize}
 \item[(i)] What are the origins behind the widely reported LLMs' apparent fragility in the presence of non-adversarial semantically-neutral perturbations (see also our own experiments in Appendix, Table~\ref{tab:dependent-debias-full} evidencing this behavior)?

 \item[(ii)] Can this robustness drop be mitigated by appropriate interventions inside the model and in the absence of any supervisory information? 
 \item[(iii)] Can we  certify robustness at both example and population levels without accessing ground truth responses and additional inference runs?
 \end{itemize}

In the next section, we present a  theoretical framework that could be used to address these questions. The theory enables producing robustness certificates linking major determinants of robustness, such as sparsity and Lipschitz constants (if defined), into a single expression. Most importantly,  our analysis identifies that one of the major factors behind the loss of robustness is a shift in expected statistical properties of the model's latent features (see Remarks \ref{remark:ep_c}, \ref{remark:cheby} and Theorems \ref{thm:partial_mcdiarmid}, \ref{thm:basic_chebyshev}). Guided by this, in Sections \ref{sec:theory-main-robustness-bias}-\ref{sec:theory-main-debias} and Appendix \ref{sec:bias-correction} we propose a robustness mitigation approach supported by relevant robustness guarantees (see Theorem \ref{thm:corrected_mcdiarmid} and its generalization in Appendices \ref{sec:appendix-general-setting}-\ref{sec:appendix-certificates}).

\section{Theoretical analysis}
\label{sec:theory}

\begin{figure*}
\centering
\includegraphics[width=\textwidth]{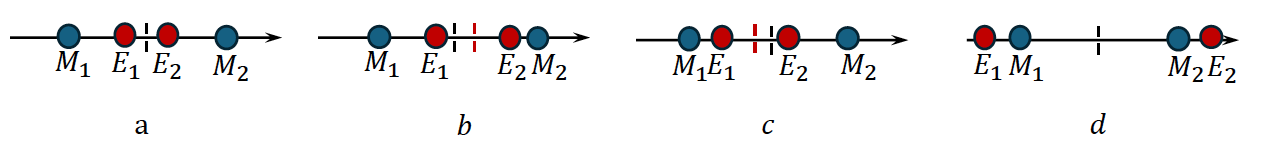}
\caption{Examples of possible outcomes due to perturbation-induced bias. $M_1$ and $M_2$ denote $M_i(x_1)$ and $M_i(x_2)$,  $E_1$ and $E_2$ denote $\mathbb{E}_{\delta x}[M_i(x_1+\delta x)]$ and $\mathbb{E}_{\delta x}[M_i(x_2+\delta x)]$. Black dashed line indicates decision threshold. {\it Panel a:} Overall reduction of robustness, and debiasing is not effective. {\it Panels b,c:} Overall reduction of robustness, but reduction in robustness radius can be compensated for by debiasing; this effectively moves the decision threshold (shown by red dashed line). {\it Panel d:} Overall enhancement of robustness due to random perturbations. Debiasing may not be needed here (see tasks in Table \ref{tab:dependent-debias-full-llama} with positive effect of perturbation on performance for examples).} \label{fig:robustness_bias_taxonomy}
\end{figure*}

In what follows, we first consider a generic setting in which the main object of interest is mappings $M:\mathbb{R}^n\rightarrow\mathbb{R}^d$ modeling internal transformation of data in LLMs. 

\subsection{Data model}
\label{sec:theory-main-data-model}

Given a module $M$, we assume that the input data  is the following multi-set:
\[
S=\{x_1,\dots,x_m\}, \ x_i\in\mathbb{R}^n, \ i=1,\dots,m
\]
 Elements of the multi-set $S$ can be immediate outputs of the LLM's tokenizer, outputs of model's intermediate layers, or outputs from the penultimate layer of the model. The values $M(x_i)$, $i=1,\dots,m,$  could, in turn, represent outputs of any relevant module as functions of $x_i$, including multi-layer perceptron (MLP), self-attention, their stacked combinations, or final logits.  

In order to keep the theory as general as possible, we do not wish to impose any further assumptions on the nature of the input data $x_i$. They may or {\it may not} be sampled from some distribution or be associated with supervisory information. In cases where $x_i$ turn out to be samples from a distribution, no standard i.i.d. assumptions are imposed on these samples \textit{a priori}.

\subsection{Model of perturbations}
\label{sec:theory-main-pert-model}

Perturbations to input data $x\in S$ are assumed to be random variables $\delta x$, whose components are independent but not necessarily identically distributed. More formally, for an original data vector $x\in S$, a perturbed data vector, $\hat{x}$, is
\[
\hat{x}=x+\delta x,
\]
where 
\[
\delta x= (\delta x_1, \dots, \delta x_n)
\]
is a vector modeling perturbations. To simplify the presentation and reveal the gist of the phenomenon, one may assume that $\delta x_i$ are independent and bounded random variables $\delta x_i$:
\[
\delta x_i \in [-c, c], \ c\in\mathbb{R}, \ c>0.
\]
Yet, as we show in Appendix \ref{sec:appendix-general-bridge}, the theory can be generalized to settings in which both the boundedness and independency assumptions may be dropped.

\subsection{ Perturbation-induced bias}
\label{sec:theory-main-robustness-bias}

Let $M_i$ be  the $i$-th component of the map $M$. Consider the function
\[
    f_i(x,\delta x)={M}_i(x+\delta x)-{M}_i(x),
\]
and let $\mathbb{E}_{\delta x}\left[f_i(x,\delta x)\right]$
be the expectation of $f_i(x,\delta x)$ at $x$. If $\mathbb{E}_{\delta x}\left[f_i(x,\delta x)\right]=0$, expected or typical response $\mathbb{E}_{\delta x} [M_i(x+\delta x)]$ of the perturbed feature coincides with the model's response $M_i(x)$ to the clean input $x$. As we show in Figure \ref{fig:ni_measuremens} in the Appendix, for a large proportion of features in an LLM this assumption may not hold true. In such cases, perturbations induce {\it  bias} $\mathbb{E}_{\delta x}\left[f_i(x,\delta x)\right]$ at $x$. 

Note that perturbation-induced bias $\mathbb{E}_{\delta x}\left[f_i(x,\delta x)\right]$ differs from the broader concept of distributional shift \cite{bansak2024learning} in that it specifically captures expected impact of additive perturbations regardless of the nature of the training data. It is also different from internal covariate shift \cite{ioffe2015batch} as it may and will likely occur during and after training.

The value of perturbation-induced bias depends on $x$. Given that the set $S$ is finite, one can assume the availability of an upper bound $C_i$:
$$
    \sup _{x \in S} \left|\mathbb{E}_{\delta x}\left[f_i(x,\delta x)\right]\right| \leq C_i, \ i=1,\dots,d.
$$

It turns out (see Appendix \ref{sec:appendix-simple} for a detailed theoretical analysis for this setting) that perturbation-induced bias is a major factor contributing to the loss of robustness in the presence of random perturbations.

Indeed, suppose that $M_i: \mathbb{R}^n\rightarrow\mathbb{R}$ for $i=1,...,d$ is $\gamma_i$-Lipschitz, i.e.,
\[
\left|M_i(x)-M_i\left(y\right)\right| \leq \gamma_i\left\|x-y\right\|,
\]
and the model's performance is deemed unaffected  by perturbations as long as
\[
|f_i(x,\delta x)| \leq \epsilon_i \ \mbox{for all} \ x\in S.
\]
The value of $\epsilon_i$ could be viewed as the model's assured robustness radius, where a smaller radius indicates better consistency in $M_i$ outputs to the expected response. In this case, from  Theorem  \ref{thm:partial_mcdiarmid} and Remark \ref{remark:ep_c} in Appendix \ref{sec:appendix-simple}, if $1-\phi_i, \phi_i\in(0,1)$ is the desired confidence of the above event, 
\begin{equation}\label{eq:robustness_bias}
\epsilon_i(\phi_i) = C_i + c\gamma_i\sqrt{2n\ln\left(\frac{2m}{\phi_i}\right)}.
\end{equation}
A similar expression that applies to non-Lipschitz mappings $M_i$ is provided in Appendix \ref{sec:appendix-simple-certificates} (see Theorem \ref{thm:basic_chebyshev} and Remark \ref{remark:cheby}):
\begin{equation}\label{eq:robustness_bias_v}
    \epsilon_i(\phi_i) = C_i + \sqrt{\frac{V_i m}{\phi_i}}, \ \operatorname{Var}[f_i(x,\delta x)]\leq V_i.
\end{equation}
This means that working with models whose inherent robustness radius is large requires feature debiasing -- that is, reducing the values of $C_i$. 

\subsection{Mitigating against perturbation-induced bias}
\label{sec:theory-main-debias}

The presence and the revealed impact of perturbation-induced bias on model's performance motivate a simple and computationally efficient mitigation strategy. 

The strategy entails replacing the original module $M_i$ with a bias-corrected module $M^\ast_i$:
\begin{equation}\label{eq:module_debiasing}
M^\ast_i(x)=M_i(x)-b_i, \ b_i\in \mathbb{R},
\end{equation}
where $b_i$ is the bias-correcting term. The rationale behind this choice is the observation that
\[
\begin{array}{ll}
& \inf_{b_i} \sup_{x \in S} \left|\mathbb{E}_{\delta x}\left[f_i(x,\delta x)\right]-b_i\right|\\
& \quad \quad \quad \quad \leq \sup_{x \in S} \left|\mathbb{E}_{\delta x}\left[f_i(x,\delta x)\right]\right|.
\end{array}
\]
Hence appropriate tuning of $b_i$ enables reducing the values of $C_i$ (see Theorem \ref{thm:corrected_mcdiarmid} for further details).

Does the reduction of bias always improve the performance of models in tasks? To gain an insight, consider a binary classification task, and let $M_i(x)$ be a decision variable determining model's classification outcomes: the model assigns $x$ to class $1$ if $M_i(x)>0$ and class $2$ otherwise. Suppose that $M_i(x_1)<0$ and $M_i(x_2)>0$. In theory, {perturbation-induced} bias may lead to different outcomes (see Figure \ref{fig:robustness_bias_taxonomy}). Panel $a$ shows a setting when perturbation-induced bias reduces robustness but debiasing does not help. Panels $b,c$ visualize scenarios whereby debiasing may improve robustness.  Panel $d$ depicts a rather unusual but nevertheless plausible setting whereby perturbations "help" to increase robustness. Debiasing, however, does not improve robustness (similar to the case shown in Panel $a$). Remarkably, all four cases were observed in experiments (Table~\ref{tab:dependent-debias-full-llama}).

\textbf{Takeaway:} Classical determinants of robustness such as 
Lipschitz constants $\gamma_i$, robustness radius $\epsilon_i$, and sparsity (the number of relevant attributes $n$) have long been considered as major factors contributing to model stability and robustness to perturbations.  
 Yet, as this analysis shows, these factors (e.g., small Lipschitz constants $\gamma_i$ or small variance if $M_i$ is not Lipschitz) do not, on their own, warrant robustness. In fact, when the value of bias $C_i$ is large, controlling robustness by mere reduction of Lipschitz constants or variance by making them arbitrarily small becomes infeasible. Therefore, robustness debiasing via direct interventions inside neural network modules, as in (\ref{eq:module_debiasing}), is a simple and computationally efficient way to mitigate performance deterioration in the presence of perturbations.

 Detailed theoretical analysis of both the simplified  and general settings, including theoretical robustness certificates and the effect of bias correction, is  provided in 
 Appendices~\ref{sec:appendix-simple}-\ref{sec:appendix-certificates}.  In the next sections, guided and motivated by the theory, we present new methods for harnessing non-adversarial robustness and the corresponding experimental setup.

\begin{figure}[t]
    \centering
    \includegraphics[width=\linewidth]{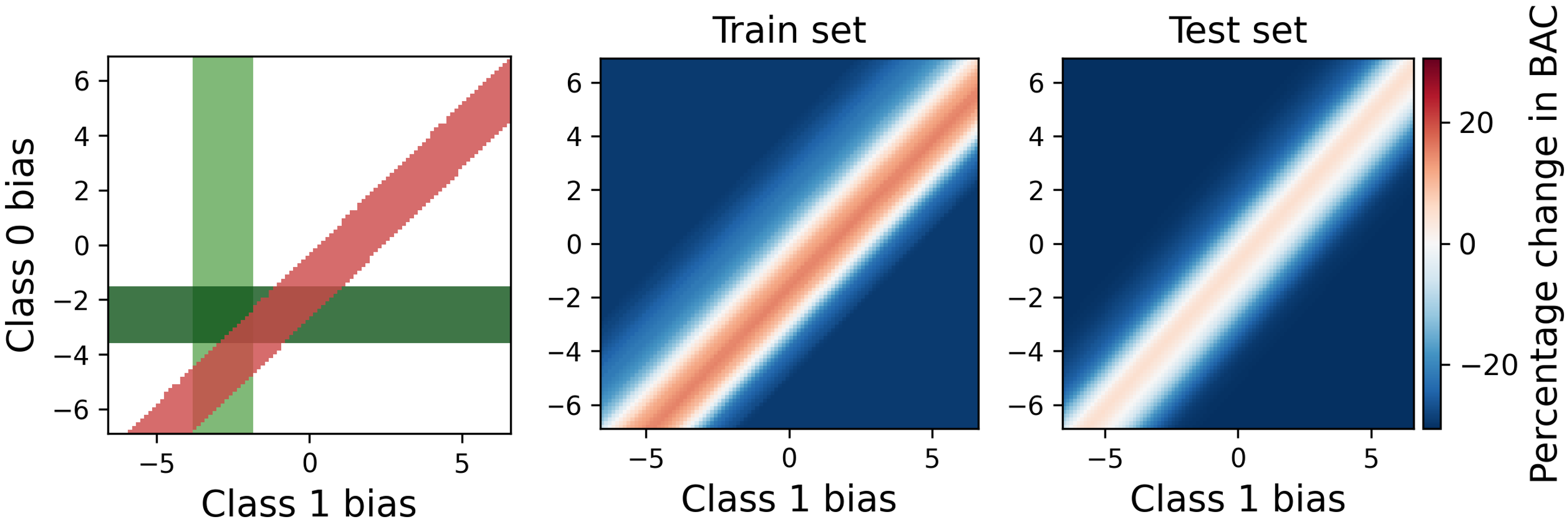}
    \caption{Impact of (constant) bias $b_i$, for the bias-corrected module $M_i^*(x) = M_i(x)-b_i$ on a random Natural Instructions task {selected in Section~\ref{sec:experiment-setup}}. Right figures show BAC of perturbed prompts; left figure show top 15-percentile improvement regions for performance (red) and $\epsilon_i(\phi)$ for two neurons (green). There exists an overlapping region of optimal intervention.}
    \label{fig:bias_search_acc}
\end{figure}

\section{Experimental setup}
\label{sec:experiment-setup}

Given the focus of our work, the experimental setup below concerns prompts' variations that are semantically equivalent to a human reader but differ in surface textual forms. 

For clean and unperturbed prompts, we consider tasks from  Natural Instructions~\cite{supernaturalinstructions}, following the task selection protocol of \citet{seleznyov2025punctuation} and applying additional task filters to focus on the problem of interest -- robustness to non-adversarial perturbations and methods for improvement. These additional task filters are: (1) a $>1\%$ drop in performance due to perturbations, (2) balanced accuracy (BAC) $\geq70\%$ on original examples from the benchmark (\emph{clean} examples). Names and descriptions of all selected tasks can be found in Appendix~\ref{sec:appendix-tasks}.

We work with  raw logit outputs of the logit layer on binary and multi-class classification tasks where the first generated token determines the predicted label/answer. In this scenario, each neuron used in the classification represents a component map $M_i(x)$, providing us with a simple and explainable setup.

 Two common types of semantically-neutral perturbations are considered: format perturbations and text perturbations. Format perturbations follow  \citet{seleznyov2025punctuation} and modify the structural elements of the prompt template, while keeping the core prompt unperturbed.  Text perturbations change prompts  through zero-width Unicode-confusables, markup wrappers, and keyboard typos. See Figure~\ref{fig:ex_perts} in the Appendix for examples. 

We evaluate on base variants of three open-weight models: Qwen-3-8B, Llama-3.1-8B, and Olmo-3-7B. Applying the task filters yields 9 unique tasks with 19 task-model pairs for format perturbations and 16 task-model pairs for text perturbations. We will refer to these as \emph{target tasks}. For each task, we sample 200 examples and 200 perturbations per example for each perturbation type. The dataset is split across three sets, 50\% training, 35\% testing, and 15\% hold-out for population-level certificates. For text perturbations, random perturbations were applied to examples in each set. For format perturbations, a separate set of formats was used for each set (Appendix~\ref{sec:appendix-splits}).

\section{Debiasing for robustness}
\label{sec:interventions}

\subsection{Logits debiasing}

Theoretical analysis in Section~\ref{sec:theory} and Appendix~\ref{sec:appendix-simple} reveals the potential emergence and impact of systematic expected shifts in module outputs under perturbation. Hence, to improve robustness, we may intervene on $M_i(x)$ with principled approaches: (1) input-independent debiasing, and (2) input-dependent debiasing.

\subsubsection{Input-independent debiasing}
\label{sec:debias-main-independent}

\textbf{Method:} A simple intervention approach is to  add a bias $b_i$ to neurons to calibrate the logits, replacing $M_i(x)$ with $M_i^*(x) = M_i(x)-b_i$. Here $b_i$ is constant and independent of input $x$.
As shown in Theorem~\ref{thm:corrected_mcdiarmid}, we can define a new perturbation-induced bias bound $C_i^*(b_i)$. Under the practical assumption that the model is performant on clean prompts, we can search for $b_i$ minimizing $C_i^*(b_i)$, "debiasing" the logit outputs.

For this method, we limit ourselves to binary classification tasks. Figure~\ref{fig:bias_search_acc} (left) presents the change in perturbed prompt accuracy for different values of $b_i\in[-C_i,C_i]$, where we can observe that there is an optimal offset where accuracy can be improved through the introduction of the bias.
For input-independent debiasing, we search for overlapping regions of improvement above the 90th percentile for robustness radii (\ref{eq:robustness_bias}), (\ref{eq:robustness_bias_v}), and BAC on the training set of examples and perturbations. This overlapping region is shown in Figure~\ref{fig:bias_search_acc} (right).

\textbf{Metrics:} To examine the effect of perturbations on models' performance, we measure the difference in BAC between clean and perturbed prompts: $\delta_{\text{drop}}$. It quantifies the performance gap we want to recover with debiasing. To examine the effect of debiasing on performance, we measure the change in BAC (relative to the unaltered model) on clean and perturbed prompts with a bias selection mechanism ($\delta_{\text{clean}}$ and $\delta_{\text{pert}}$, respectively). To examine the effect of debiasing on robustness, we measure the mean percentage decrease in robustness radii across neurons, $p_{\epsilon L}$ and $p_{\epsilon V}$ for robustness radii (\ref{eq:robustness_bias}), (\ref{eq:robustness_bias_v}).

\textbf{Results:}
We examine multiple binary, balanced classification tasks from the set of target tasks. 
An example of input-independent debiasing  across models and perturbation types for a randomly chosen single task is shown in Table~\ref{tab:independent-debias} (results on other tasks are shown in Table~\ref{tab:independent-debias-full} in the Appendix). These results illustrate the impact of $b_i$ on both $\delta_\text{pert}$ and $\epsilon_i(\phi)$. We note that debiasing in general results in improvements in  $\delta_{\text{pert}}$ and contraction of at least one of the two robustness radii, but it often comes at the cost of performance on clean examples (negative $\delta_\text{clean}$). Moreover, optimal regions may not exist for all models and tasks.

{\textbf{Takeaway}:} An optimal overlap region between robustness and performance can exist for certain models and tasks, and input-independent debiasing may constitute a simple yet computationally efficient first-stop tool for improving robustness to semantically-neutral perturbations . However, input-independent debiasing is inherently constrained: only a few constants $b_i$ control performance. In the next section, we examine input-dependent bias $b_i(x)$ and provide an input-dependent method for debiasing.

\begin{table}[t]
    \centering
    \caption{Example input-independent and input-dependent debiasing for a random binary Task 065 with format perturbations (top section) and text perturbations (bottom section). Measurements based on $m=70$, $\phi=0.05$. For input-independent debiasing, rows with no values show no overlapping region of improvement for both perturbed prompt BAC and $\epsilon_i(\phi)$ in the train set. See Sections~\ref{sec:debias-main-independent}-\ref{sec:debias-main-dependent} for description of the measured metrics. Here \red{red} and \blue{blue} indicate decrease and increase in performance/robustness due to debiasing, respectively. See Tables~\ref{tab:independent-debias-full}-\ref{tab:dependent-debias-full} in the Appendix for results on more tasks.}
    \scriptsize
	\begin{tabular}{lrrrrr|rrr}
	\toprule
    &  & \multicolumn{4}{c}{\textbf{Input-independent}} & \multicolumn{3}{c}{\textbf{Input-dependent}} \\  \cmidrule(lr){2-6} \cmidrule(lr){7-9} 
	Model    & $\delta_{\text{drop}}$ & $\delta_{\text{clean}}$ & $\delta_{\text{pert}}$ & $p_{\epsilon L}$ & $p_{\epsilon V}$ & $\delta_{\text{clean}}$ & $\delta_{\text{pert}}$ & $p_{\epsilon V}$ \\
	\midrule
	Llama    & -20.9 & \red{-10.1} & \blue{11.1} & \blue{75.9} & \blue{53.1} &  \red{-1.9} & \blue{16.9} & \blue{81.9} \\
	Qwen     &  -5.7 &          -- &          -- & -- & -- &  \red{-2.3} &  \blue{5.4} & \blue{51.5} \\
	Olmo     & -13.4 &          -- &          -- & -- & -- &  \red{-9.6} &  \red{-1.3} & \blue{13.9} \\
    \midrule
	Qwen     &  -6.1 &         -- &         -- & -- & -- &  \red{-9.5} & \blue{12.6} & \blue{90.3} \\
	Llama    & -16.7 &         -- &        -- & -- & -- & \red{-12.9} &  \blue{5.7} & \red{-45.9} \\
	Olmo     & -21.8 & \blue{2.7} & \blue{7.8} & \blue{35.5} & \red{-11.6} &  \red{-1.1} &  \blue{4.9} &  \blue{52.8} \\
	\bottomrule
	\end{tabular}
    \label{tab:independent-debias}
\end{table}

\subsubsection{Input-dependent debiasing}
\label{sec:debias-main-dependent}

{\textbf{Method}:}  To add flexibility,  we consider the following input-dependent variant of the corrected module:
$$
    M_i^*(x) = M_i(x) - b_i(x)
$$
The rationale is that by optimizing the value of 
\[
\sup_{x\in S}|\mathbb{E}_{\delta x} f_i(x,\delta x) - b_i(x+\delta x)|,
\]
over $b_i\in\mathcal{F}$,  where $\mathcal{F}$ is a class of functions $\mathbb{R}^n\rightarrow\mathbb{R}$ containing all constant functions,
one can potentially achieve a more efficient compensation of $C_i$ and $V_i$. Here we define 
$$
    b_i(x) = w_i\cdot \psi(x) + \beta_i
$$
 where $\psi(x) =\left[x^{\top}, 1\right]^{\top}\in\mathbb{R}^{(n+1)}$ and closed-form solutions for $w_i$ and $\beta_i$ are found via linear ridge regression  
(more details in Appendix~\ref{sec:appendix-ridge}).

\begin{table}[t]
    \centering
    \caption{Measuring impact of input-dependent debiasing across target tasks for models with format perturbations (top section) and text perturbations (bottom section). See Sections~\ref{sec:debias-main-independent}-\ref{sec:debias-main-dependent} for description of measured metrics. See Table~\ref{tab:dependent-debias-metrics-full} in the Appendix for results on each task and Table~\ref{tab:dependent-debias-metrics-CI} for confidence intervals.}
    \scriptsize
	\begin{tabular}{lrrrrrr}
	\toprule
	  & \multicolumn{2}{c}{\textbf{Llama 8B}} & \multicolumn{2}{c}{\textbf{Olmo 8B}}  & \multicolumn{2}{c}{\textbf{Qwen 7B}} \\
     & Mean & Std & Mean & Std & Mean & Std \\ \cmidrule(lr){2-3} \cmidrule(lr){4-5} \cmidrule(lr){6-7}                    
	$p_{\text{damage}}$   & 26.2 & 21.6 & 16.7 & 12.6 & 17.9 & 12.8 \\
	$p_{\text{recover}}$  & 20.1 & 96.6 & 67.8 & 42.6 & 93.4 &  6.1 \\
	$p_{\text{clean}}$    & -4.9 &  5.1 & -3.7 &  7.2 & -2.9 &  3.6 \\
	$p_{\text{combined}}$ & 28.8 & 33.6 & 16.3 & 24.3 & 22.6 & 21.1 \\
	$p_{\epsilon V}$      & 83.8 &  9.1 & 66.1 & 23.1 & 76.5 & 17.9 \\
    \midrule
	$p_{\text{damage}}$   &  21.3 &  3.6 &  26.9 &  8.5 &  2.8 &  3.9 \\
	$p_{\text{recover}}$  &  66.4 & 30.6 &  51.1 & 31.1 & 55.2 & 67.1 \\
	$p_{\text{clean}}$    & -16.4 & 18.5 & -12.8 & 11.4 & -1.1 &  0.9 \\
	$p_{\text{combined}}$ &    16 &    7 &  18.1 & 10.8 &  2.6 &    3 \\
	$p_{\epsilon V}$      &  74.3 & 19.4 &  46.2 & 43.6 & 78.3 & 22.1 \\
	\bottomrule
	\end{tabular}
    \label{tab:dependent-debias-metrics}
\end{table}

{\textbf{Metrics}:} We examine metrics introduced in Section~\ref{sec:debias-main-independent}, with the exception of $p_{\epsilon L}$, which is omitted since it requires Lipschitz regularization on the linear ridge regression.  We also introduce other parameters aiding evaluation across target tasks. $p_{\text{damage}}$ measures the the percentage drop in BAC when perturbations are introduced relative to clean examples' BAC; $p_{\text{recover}}$ measures the percentage of $p_{\text{damage}}$ recovered by debiasing; $p_\text{clean}$ measures the percentage change of BAC on clean examples due to debiasing; $p_\text{combined}$ measures the percentage improvement over combined clean and perturbed examples BAC due to debiasing. In addition, we measure task-level confidence intervals (95\% CI w.r.t. mean) with bias-corrected accelerated bootstrapping with 10k resamples.

{\textbf{Results}:} Examining the same example task in Table~\ref{tab:independent-debias}, we observe high $p_{\epsilon V}$ indicating improvement to the robustness radius. Table~\ref{tab:dependent-debias-full} in the Appendix shows results for all models, target tasks, and perturbation types. For most models and perturbations we observed improvement in $\delta_{\text{pert}}$, albeit  at some cost to $\delta_{\text{clean}}$.

Table~\ref{tab:dependent-debias-metrics} summarizes performance measures for input-dependent debiasing across target tasks. For individual tasks and models, $p_\text{damage}$ ranges from $-0.5\%$ to $62.2\%$ (see results for each task in Table~\ref{tab:dependent-debias-metrics-full} in the Appendix). Combined with the results from Table~\ref{tab:dependent-debias-metrics}, we observe overall substantial performance degradation. Negative $p_{\text{damage}}$ indicates performance improvement due to perturbations, corresponding to case $d$ in Figure~\ref{fig:robustness_bias_taxonomy}. The $p_\text{recover}$ metric shows that input-dependent debiasing recovers a significant portion of the perturbation-induced damage in most cases, though at the cost of reduced clean-example performance (as shown by generally negative $p_\text{clean}$). This is further supported with confidence intervals in Table~\ref{tab:dependent-debias-metrics-CI} in the Appendix. When perturbations occur frequently, debiasing becomes favorable, with positive $p_\text{combined}$ for most tasks. We also observe considerable improvement of the robustness radius, as expected.

{\textbf{Takeaway}:} Input-dependent debiasing provides an efficient and broadly applicable correction mechanism, recovering a significant portion of performance lost to perturbations while simultaneously improving robustness radii.  The trade-off is a modest reduction in clean-example performance, but the net combined improvement remains positive across the majority of tasks, models and perturbation types.

\begin{figure}[t]
    \centering
    \includegraphics[width=\linewidth]{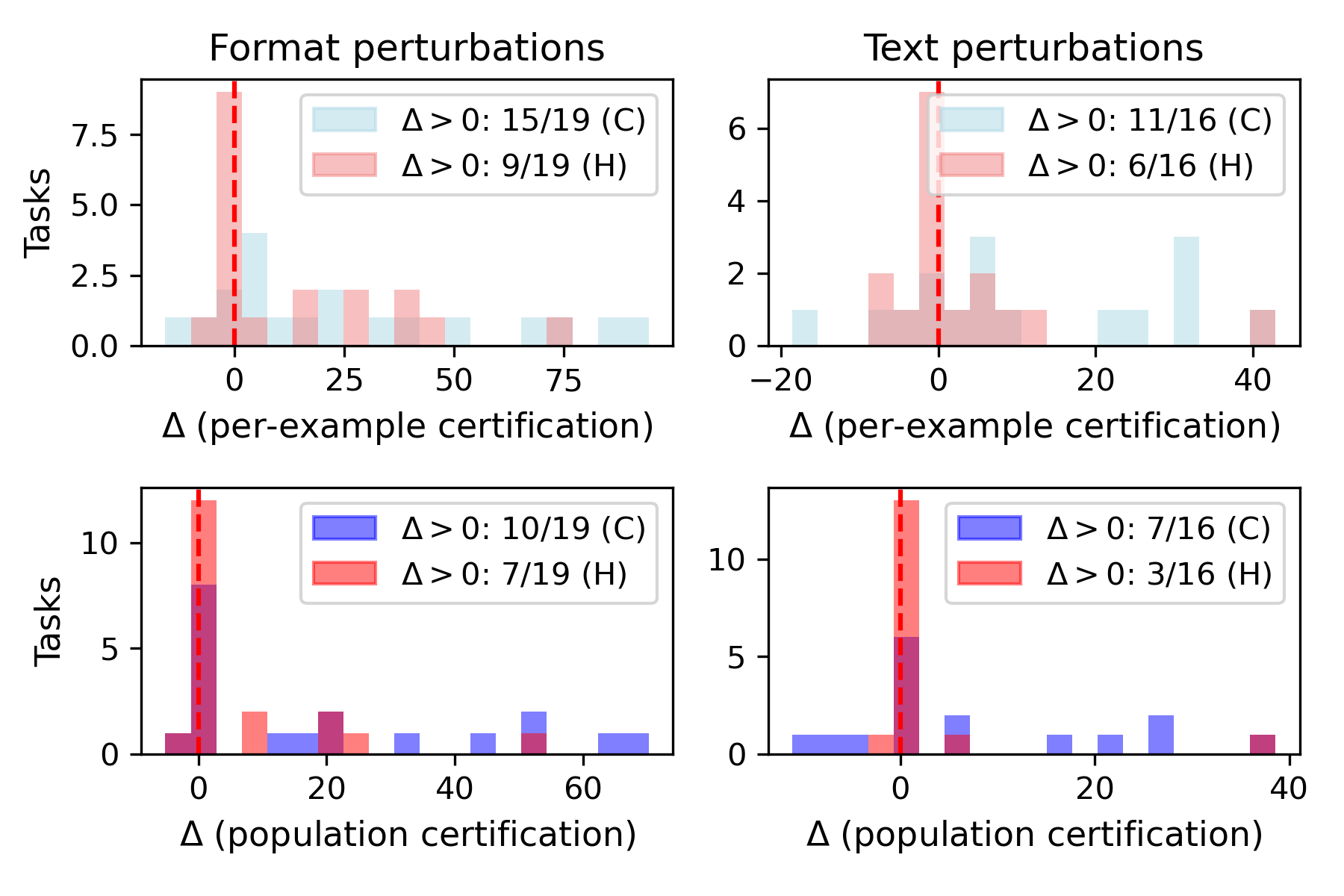}
    \caption{Effect of intervention on per-example and population-level certification. The histograms account for target tasks across models; in the legends we show the fraction of tasks with $\Delta>0$.}
    \label{fig:cert_histogram_masked}
\end{figure}

\begin{figure}[t]
    \centering
    \includegraphics[width=\linewidth]{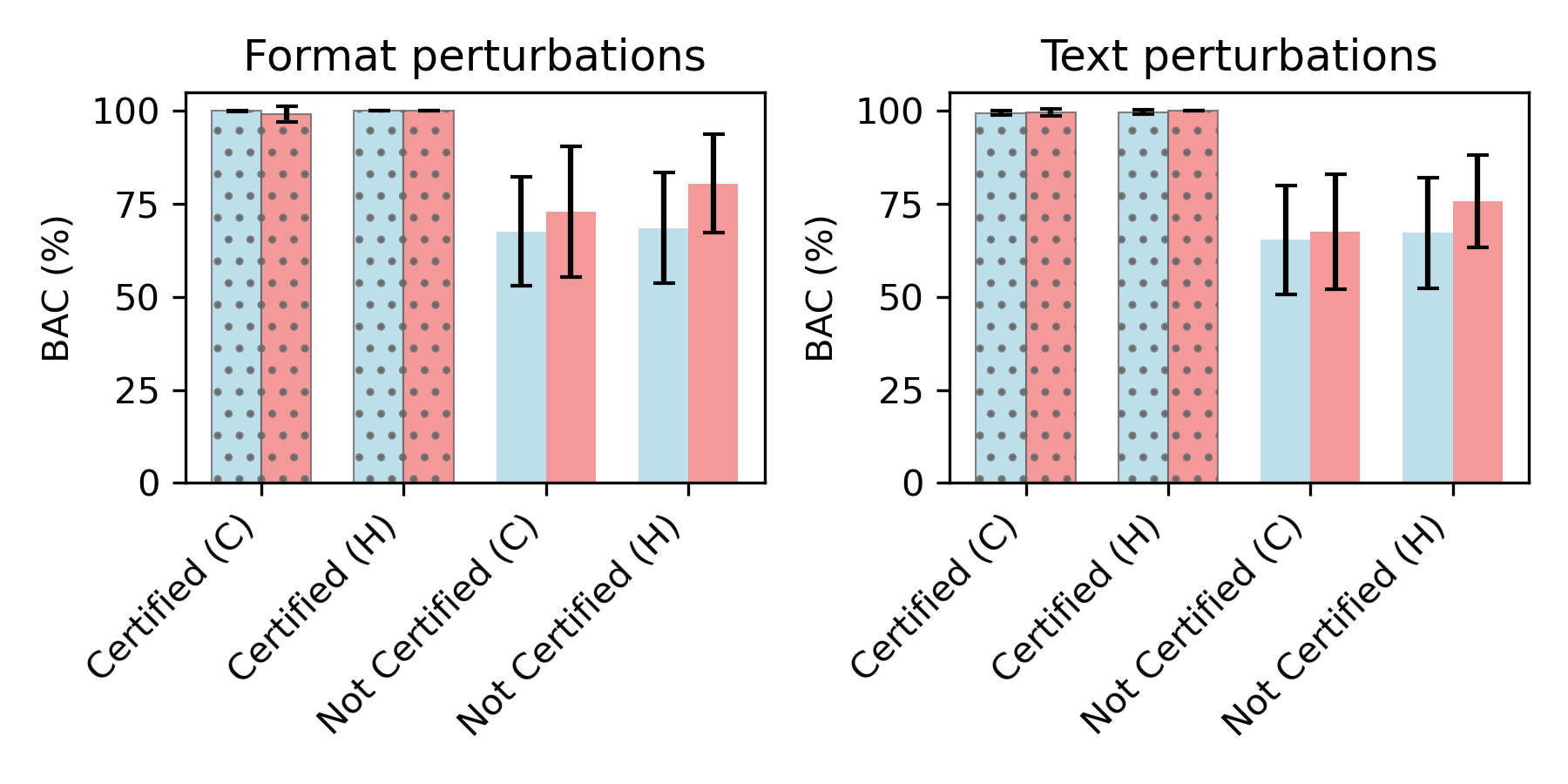}
    \caption{Empirical verification of per-example certification. Blue and red columns correspond to before and after debiasing, respectively. See Table~\ref{tab:per-example-cert-nocert-performance-full} in the Appendix for detailed results.}
    \label{fig:example_vert_histogram_masked}
\end{figure}

\subsection{Robustness certification}
\label{sec:robustness-certification-exps}

The  component-wise stability setting in Sections~\ref{sec:theory-main-data-model}-\ref{sec:theory-main-debias} (and Appendices~\ref{sec:appendix-simple}-\ref{sec:bias-correction})  can be generalized to \emph{margin stability}  certification on task performance (see Appendices~\ref{sec:appendix-general-setting}-\ref{sec:appendix-certificates}). This combines performance and robustness under a single framework, enhancing the practical usability of robustness certification. We note, however, that whilst component-wise certification stemming from Sections~\ref{sec:theory-main-data-model}-\ref{sec:theory-main-debias} and Appendices~\ref{sec:appendix-simple}-\ref{sec:bias-correction} does not require ground truth data, margin certification typically does so in multi-class tasks.

{\textbf{Method}:} Based on Theorem~\ref{thm:boxed_samplewise_pert} and Remark~\ref{remark:1d-case}, a per-example certificate can be estimated for each clean example $(x,y)$ if the margin $\rho(x,y)$ exceeds a threshold determined by the expected margin shift under a reference (uniform) distribution plus a concentration term. Once certified, performance on perturbed variants of the  example is guaranteed up to a failure ratio. The per-example level certificate requires access to the clean example. Based on Remark~\ref{remark:population} with appropriate assumptions, we can generalize to population-level certification; examples drawn from the same data distribution will have guaranteed performance on perturbed examples without requiring clean examples.

{\textbf{Metrics:}} Here we move from certification of robustness radii to certification of distances to decision boundaries. For per-example certificates, we predict the percentage of certified examples using Chebyshev ($p_C$) and Hoeffding's ($p_H$) inequalities. Under population certification assumptions, $P_C$ and $P_H$ provide population-level performance guarantees on perturbed examples. The primed variants $p_C'$, $p_H'$, $P_C'$ and $P_H'$ correspond to values after input-dependent debiasing. $\Delta$ is the difference in metrics after debiasing. The same task-level confidence intervals measured for debiasing metrics are also measured for certification metrics.

{\textbf{Results:}} Table~\ref{tab:certification-combined} shows that base models can be certified for certain tasks, although to a limited extent. Figure~\ref{fig:cert_histogram_masked} and Table~\ref{tab:certification-combined} show an overall positive impact of debiasing on certification: a substantially higher percentage of examples are certified at the per-example level, and stronger population-level guarantees. This is further confirmed with confidence intervals in Table~\ref{tab:certification-combined-CI} in the Appendix. For some tasks (as shown in Table~\ref{tab:per-example-rate-full} and Table~\ref{tab:population-guarantee-full}),  debiasing  drastically improves certification rates. Figure~\ref{fig:example_vert_histogram_masked} verifies per-example level certification both before and after debiasing, showing that perturbed variants of certified clean examples achieve near 100\% BAC (exceeding the failure ratio $\phi=0.1$), with performance differing significantly between certified and uncertified prompts.

{\textbf{Takeaway}:} Debiasing can improve certification at two levels simultaneously: it increases the \emph{proportion of margin-certifiable certifiable examples}, which translates to stronger \emph{population-level} robustness guarantees (as shown in Table~\ref{tab:certification-combined}). Strikingly, this is consistent with what the theory predicts (see Figure~\ref{fig:robustness_bias_taxonomy}, panels b and c, where debiasing effectively increases margins). Although the guarantees remain conservative, this indicates that reducing perturbation-induced bias and stabilizing margin statistics is beneficial not only for average perturbed accuracy, but also for the attainment of formal robustness certificates.

\begin{table}[t]
    \centering
    \caption{Mean certification metrics at both the per-example and population levels with/without input-dependent debiasing (with $\phi=0.1$ and $\psi=0.05$) for target tasks with format perturbations (top section) and text perturbations (bottom section). Primed metrics are measured after debiasing. See Section~\ref{sec:robustness-certification-exps} for description of measured metrics. See Tables~\ref{tab:per-example-rate-full}, \ref{tab:population-guarantee-full} and \ref{tab:certification-combined-CI} in the Appendix for detailed results.}
    \scriptsize
    \begin{tabular}{lrrrrrrrr}
    \toprule
    & \multicolumn{4}{c}{\textbf{Per-example}} & \multicolumn{4}{c}{\textbf{Population}} \\
    \cmidrule(lr){2-5} \cmidrule(lr){6-9}
    \textbf{Model} & $p_{C}$ & $p_{H}$ & $p_{C}'$ & $p_{H}'$ & $P_C$ & $P_H$ & $P_{C}'$ & $P_{H}'$ \\
    \midrule
    Llama 8B &  7.9 &  0   & 17.4 &  6.9 &  0.4 &  0   &  8.4 &  1.4 \\
    Olmo 7B  &  8.9 &  3.9 & 36.4 & 17.0 &  3.7 &  1.5 & 22.9 &  8.0 \\
    Qwen 8B  &  9.4 &  0   & 55.7 & 27.4 &  2.0 &  0   & 37.9 & 15.1 \\
    \midrule
    Llama 8B & 20.7 &  4.8 & 29.5 &  6.2 &  9.2 &  0.4 & 15.8 &  0.1 \\
    Olmo 7B  &  8.8 &  2.7 & 13.5 &  1.6 &  3.0 &  0   &  4.6 &  0   \\
    Qwen 8B  & 32.4 & 12.4 & 63.8 & 31.0 & 19.0 &  1.6 & 42.8 & 15.1 \\
    \bottomrule
    \end{tabular}
    \label{tab:certification-combined}
\end{table}

\section{Discussion}\label{sec:discussion}

\subsection{Efficiency of debiasing and certification}

 Based on Remark~\ref{remark:1d-case}, certification reduces  to a one-dimensional problem, where if we have some small quantity of calibration perturbations or if perturbation statistics are known, even sampling from the uniform distribution becomes unnecessary. In addition, the proposed debiasing intervention is closed-form and efficient to compute. It integrates with the existing model structure, incurring negligible inference overhead.

\subsection{Supervisory information}

Debiasing for robustness can depend on the data and supervisory information available for the task at hand. Here we introduce both the unsupervised and supervised settings.

{\textbf{Debias without supervision}:} Input-dependent debiasing does not require supervision information in the form of class labels; what is needed instead is a clean example ( or a calibration set of clean examples). The unsupervised debiasing can improve performance, and as we show empirically, improves the rates of  certification even for certification schemes requiring ground truth data to construct. 

{\textbf{LoRA as debiasing method}:}  LoRA can be used for debiasing with full supervisory information (ground truth labels for clean and perturbed examples). Figure~\ref{fig:lora_combined} provides examples of LoRA  and LoRA  with input-dependent debiasing. Results indicate that LoRA is an effective debiasing method if supervisory information is available. However, applying debiasing on top of LoRA may yield further gains in robustness and certification rates.

\subsection{Tradeoff between clean and perturbed performance}

Degradation of debiased model performance on clean data could be caused by excessive facilitation of "bad or irrelevant" features through debiasing, or through corrupting or even overwriting the impact of "relevant" attributes as a consequence of debiasing. Both causes could be controlled via a penalty term $\alpha\left\|\boldsymbol{\Psi}_c W\right\|^2$, where $\mathbf{\Psi}_c$ contains stacked $\psi(x)=\left[x^{\top}, 1\right]^{\top}$ and $x$ are features of clean (unperturbed) examples, added to the regression cost function:
$$
    \min _W\|\boldsymbol{\Psi} W-\mathbf{F}\|^2+\lambda\|W\|^2+\alpha\left\|\boldsymbol{\Psi}_c W\right\|^2.
$$
The corresponding solution, provided that the matrix $\boldsymbol{H}=\left(\mathbf{\Psi}^{\top} \mathbf{\Psi}+\alpha \mathbf{\Psi}_c^{\top} \mathbf{\Psi}_c+\lambda I_{n+2}\right)$ is non-singular, is:
$$
    W=\boldsymbol{H}^{-1} \mathbf{\Psi}^{\top} \mathbf{F}.
$$
Introduction of the extra penalty term allows balancing between reducing projection of the weights on clean features (for $\alpha > 0$ ) and facilitating exploitation of "clean" features (for $\alpha  < 0$) when other features' impact on clean performance is damaging. Parameter $\alpha$, therefore, enables control over the clean-perturbed trade-off, as shown in the following example in Figure~\ref{fig:clean_alpha_sweep_task065} and Table~\ref{tab:clean-alpha-full} of the Appendix.

\begin{figure}[t]
    \centering
    \includegraphics[width=0.9\linewidth]{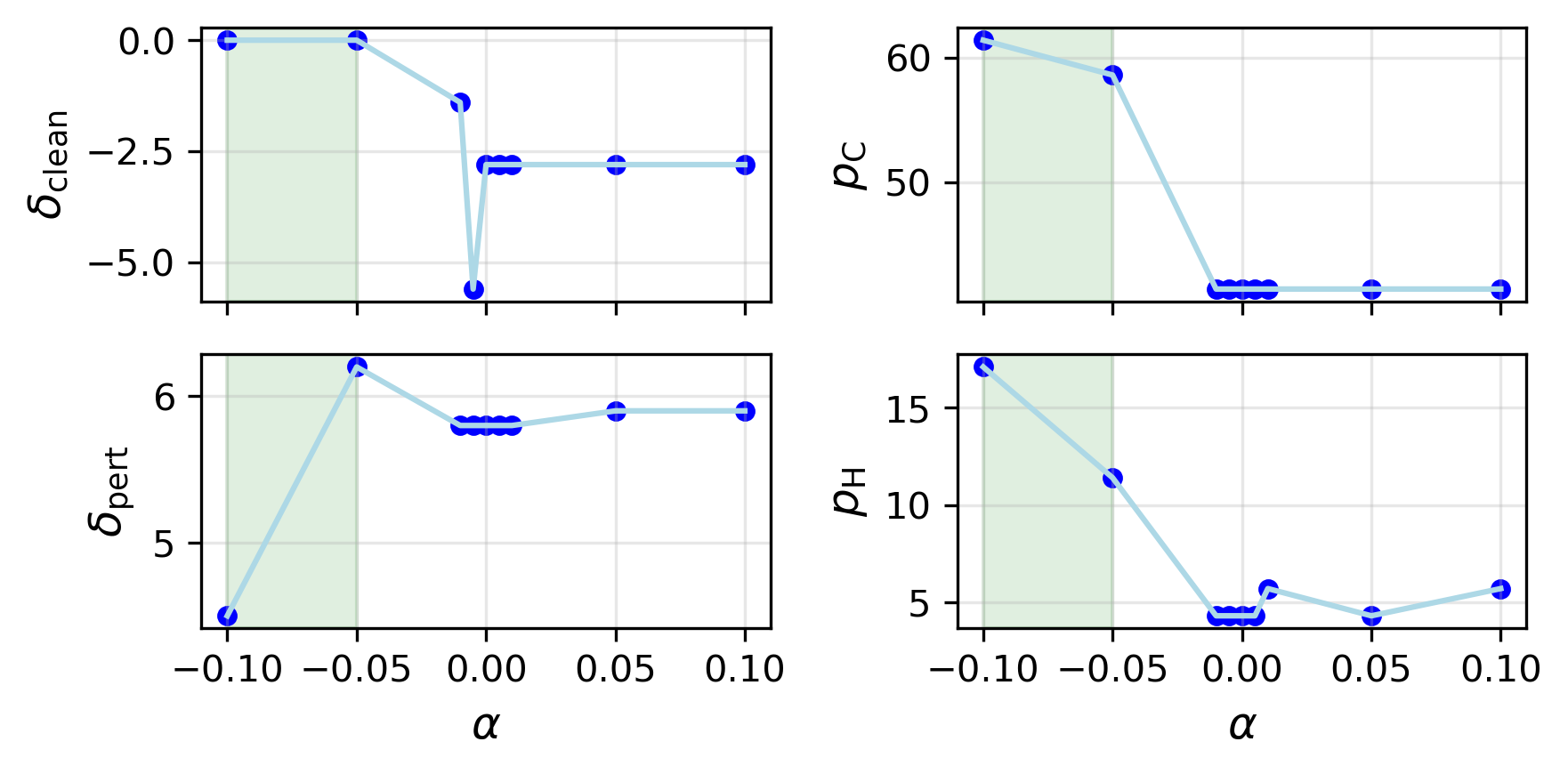}
    \caption{Effect of Gram penalty on input-dependent debiasing for Task 065. $\alpha$ is the parameter controlling the magnitude of the Gram penalty, enabling control over the clean-perturbed trade-off. Green region indicate $\alpha$ ranges where $\delta_\text{clean}\geq 0$. More examples demonstrating this effect can be found in Table~\ref{tab:clean-alpha-full} in the Appendix.}
    \label{fig:clean_alpha_sweep_task065}
\end{figure}

\subsection{Generalization to generation tasks}

For generalization to generation tasks,  we move away from logits debiasing to debiasing features in intermediate layers. The approach is to identify the directions alongside which perturbations most consistently shift the last-token representation, and to debias projections to these directions. In application, this means minimizing expected clean-perturbed feature differences, performing PCA and debasing the top-$k$ principal components. If $k=1$, this reduces to input-dependent debasing in Section~\ref{sec:debias-main-dependent}. Preliminary evidence of this debiasing approach for generation and summarisation tasks are shown in Table~\ref{tab:generation} with details in Appendix~\ref{sec:appendix-generation}.

\begin{figure}[t]
    \centering
    \begin{subfigure}{\linewidth}
        \centering
        \scriptsize
        \begin{tabular}{lrrr|rrr}
        \toprule
         & \multicolumn{3}{c}{LoRA}  & \multicolumn{3}{c}{Debiasing LoRA}\\ \cmidrule(lr){2-4} \cmidrule(lr){5-7} 
        Task & $\delta_{\text{drop}}$ & $\delta_{\text{clean}}$ & $\delta_{\text{pert}}$ & $\delta_{\text{drop}}$ & $\delta_{\text{clean}}$ & $\delta_{\text{pert}}$ \\
        \midrule
        65   &  -5.7 & \blue{1.4} &  \blue{4.2} &  0.4 &          0 & \red{-0.0} \\
        69   & -14.3 & \red{-7.1} & \blue{10.8} &  6.0 & \blue{1.3} & \blue{0.3} \\
        1297 & -35.6 &          0 & \blue{34.3} & -0.3 &          0 & \blue{0.7} \\
        \bottomrule
        \end{tabular}
    \end{subfigure}
    
    \vspace{0.1em}
    
    \begin{subfigure}{\linewidth}
        \centering
        \includegraphics[width=\linewidth]{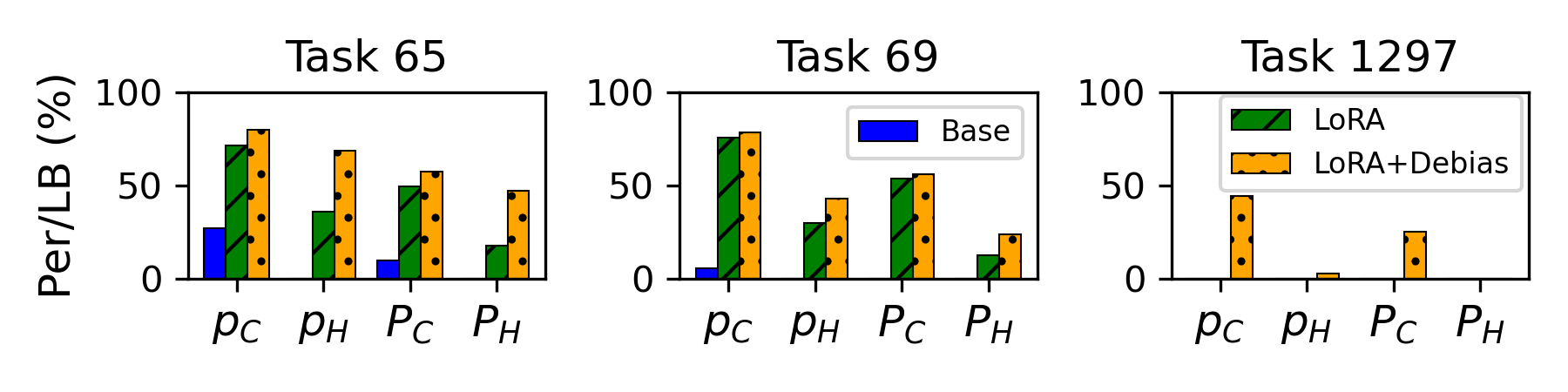}
    \end{subfigure}
    \caption{Impact of LoRA and debiasing on certification for tasks with format perturbations on Qwen 8B. LoRA adapters trained for layer 8 \texttt{mlp.up\_proj} with clean + perturbed examples. Input-dependent debiasing applied on top of adapter-integrated model.}
    \label{fig:lora_combined}
\end{figure}

\subsection{Positioning}

We position our debiasing approach along three axis. First, in inference-cost, randomized smoothing~\cite{cohen2019certified}, self-denoising~\cite{zhang2023certified} and template ensembling~\cite{zhang2023certified} require multiple inference passes per single prompt. Our method adds no overhead beyond a single pass. Second, in application, these methods change input prompts, while ours act on output logits or intermediate features. Third, in  perturbation model and certificates, the three target worst-case $l_2$-radius to guard against distribution-agnostic  perturbations, whereas we address typical performance on non-adversarial nominal perturbations with data-aware certificates. Our method is closer in spirit to batch calibration~\citep{zhou2023batch}, but yields principled certificates and generalizes beyond classification.

\subsection{Limitations}

Perturbations examined in this work are not adversarial; they are semantically-neutral and randomly injected into prompts. Yet, their impact on performance was substantial. These results appear to be at odds with the literature on randomly perturbed models (cf. \citealt{sutton2024adversarial}), where a radically different typical behavior of deep learning models was observed and explained.  Further analysis revealed that this difference can be explained by the fact that effective intrinsic dimensions of feature spaces in each task are low, spanning 4-5 principal components. Whereas the robustness assurances in \citet{sutton2024adversarial} assume high  intrinsic dimensionality of data. This suggests that our method could be most efficient in relatively low-dimensional settings; as in genuinely high dimensions, concentration effects may mitigate the impact of systematic shifts.

Further  limitations include: (1) empirical results focus largely on first-token prediction tasks enabling stringent validation of the theoretical analysis, but evaluation of debiasing in the intermediate layers and generalization to generation tasks was not as exhaustive; (2) the debiasing framework will result in a clean-perturbed input trade-off; more sophisticated debiasing methods might be designed to push the Pareto frontier on this trade-off; (3) certificates rely on distribution-agnostic concentration inequalities and assumptions that are conservative; (4) white-box access to model weights is required for debiasing.

\begin{table}[t]
\centering
\caption{Example of debiasing in the intermediate layers for generalization to short-form generation tasks. Examples based on text perturbations with Qwen-8B. NQ refers to the Natural Questions benchmark, while EM refers to Exact Match metric.}
\scriptsize
\begin{tabular}{lrrrr}
\toprule
                      & NQ & GSM-8k    & TriviaQA  & XSum       \\
\midrule
$\delta_\text{drop}$  & 9.5 EM     & 34.3 EM         & 35.9 EM     & 8.9 F1     \\
Layer 0 $p_\text{recover}$                 & 104.5     & 63.3                 & 87.5      & 123.1      \\
Layer 8 $p_\text{recover}$                 & 105.5     & 68.8                 & 83.4      & 111.8      \\
Layer 16 $p_\text{recover}$                & 79.6      & 59.4                 & 58.4      & 56.0       \\
Layer 32 $p_\text{recover}$                & -31.4     & -79.5                & -33.7     & -125.9     \\
\bottomrule
\end{tabular}
\label{tab:generation}
\end{table}

\section{Conclusion}\label{sec:conclusion}

Our work presents a principled framework for addressing the challenge of robustness of LLMs to semantically-neutral random prompt perturbations. Our theoretical analysis identifies a critical, yet previously underappreciated, factor of \emph{perturbation-induced bias} -- a systematic shift in expected module outputs under random perturbations. 

Motivated by this insight, we propose debiasing approaches that compensate for these systematic shifts, are computationally efficient, do not require model retraining, and can operate without access to supervisory information. As we demonstrate in experiments, they enable recovering substantial portions of performance lost to perturbations. 

Moreover, the proposed debiasing approaches stemming from our theoretical analysis strengthen formal robustness certificates that measure distances to decision boundaries: our interventions increase the proportion of per-example certifiable examples, and provide measurably stronger population-level robustness guarantees. 

These results position our principled debiasing approaches as a practical intervention for enhancing LLM robustness in deployment settings where prompt variability is expected. Future work includes extending debiasing to long-form generation tasks, extending to application domains beyond languages, and developing tighter certificates.

\section*{Impact Statement}

This paper presents work aimed at improving the robustness and reliability of LLMs against random non-adversarial perturbations which are always expected at deployment but which could be unaccounted for at the stage of training. The latter may occur due to the lack of computational resources or information about model deployment scenarios. One important feature of our  method is that it can be applied in settings when ground truth information or data are not available at the model's training and robustness certification phases.  

We believe this research contributes positively to making LLM deployments more trustworthy in real-world settings where inputs may contain formatting variations, typographical errors, or other semantically-neutral textual alterations. We empirically confirm that stock models from common architectural families may suffer from significant performance degradation induced by such random perturbations of prompts.  Therefore, our theory, methods, and demonstration that interventions like debiasing can improve formal certification without requiring expensive full-model retraining, is practically impactful as it may improve trust in open-weight models in deployments to specific tasks.

\section*{Acknowledgment}

The work was supported by the RSF project 25-71-30008.

\bibliography{references}
\bibliographystyle{styles/icml/icml2026}

\appendix

\section{Simple setting}
\label{sec:appendix-simple}

In this section we provide theoretical analysis underpinning the development of the method.

The main element of the LLM's input-output mapping which will be the subject of our theoretical analysis is a  $\gamma$-Lipschitz mapping $\mathbb{R}^n\rightarrow\mathbb{R}$. This mapping can represent input-output properties of any LLM module (MLP, Self-Attention), multiple modules stacked together, or even finer-grained relations between latent variables contained within each LLM module.

\begin{figure}[h]
    \centering
    \includegraphics[width=\linewidth]{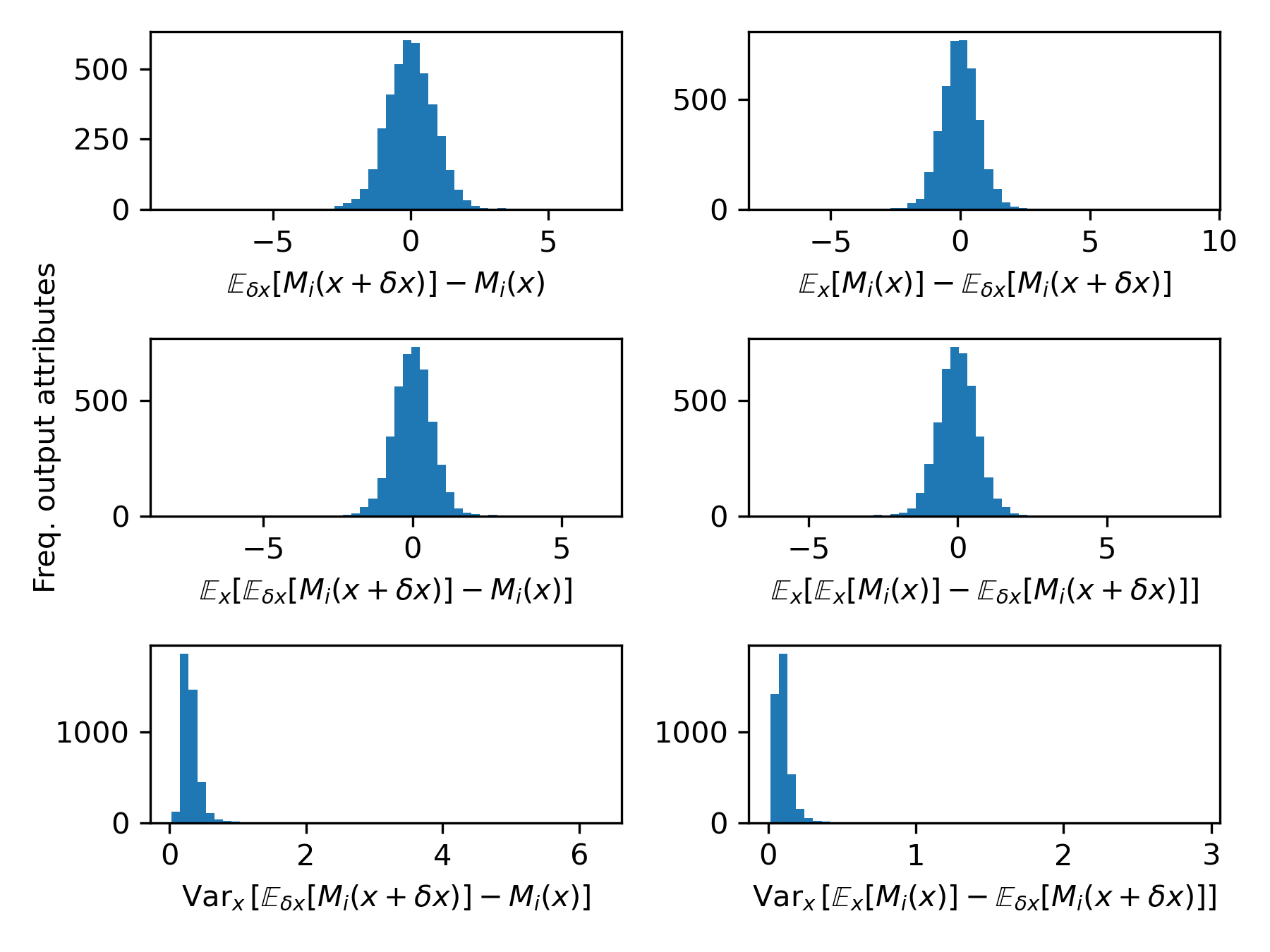}
    \caption{Measurements on perturbed formats of \texttt{Task 1284} of the \texttt{Natural-Instructions} dataset and Llama 8B model to validate geometric intuition in Figure~\ref{fig:visualization}. Here $M_i(x)$ corresponds to a single output attribute of the entire LLM before the logit layer, and the histograms show the distribution of measurements across all $n=4096$ output attributes.}
    \label{fig:ni_measuremens}
\end{figure}

\subsection{Partial module setting} 

Assume the availability of a multi-set $S$ where
$$
    S=\left\{x_1, \ldots, x_m\right\}, \quad x_i \in \mathbb{R}^{n}, \quad i=1, \ldots, m
$$
Consider a module $M \in \mathcal{M}$ where $M:\mathbb{R}^n\rightarrow \mathbb{R}^d$ and $\mathcal{M}$ is a space of all potential modules. Each sub-module $M_i: \mathbb{R}^n\rightarrow\mathbb{R}$ for $i=1,...,d$ is $\gamma_i$-Lipschitz,
$$
	\left|M_i(x)-M_i\left(y\right)\right| \leq \gamma_i\left\|x-y\right\|
$$
and provides the mapping to a single output attribute.

\begin{assumption}\label{assumption:perturbations}
Perturbed inputs $\hat{x}\in\mathbb{R}^n$ are 
$$
    \hat{x}=x+\delta x
$$
where $\delta x=(\delta x_1, \ldots, \delta x_n)$ is the vector of independent random variables satisfying
$$
    \delta x_j \in[-c, c]
$$
for some known $c>0, c \in \mathbb{R}$
\end{assumption}

\begin{assumption}\label{assumption:sup_exp_f}
Let for each partial module, function $f_i:\mathbb{R}^{n}\times \mathbb{R}^n\rightarrow\mathbb{R}$ measure the difference between output attributes of base and perturbed inputs, i.e. for the $i$-th partial module,
\begin{equation}
    f_i(x,\delta x)={M}_i(x+\delta x)-{M}_i(x),
\end{equation}
and for each  partial module, there exist and are known upper bounds $C_i$:
$$
    \sup _{x \in S} \left|\mathbb{E}_{\delta x}\left[f_i(x,\delta x)\right]\right| \leq C_i, \ i=1,\dots,d.
$$
\end{assumption}

\section{Robustness certificates for the simple setting}
\label{sec:appendix-simple-certificates}

\begin{theorem}[Robustness certificate 1]\label{thm:partial_mcdiarmid}

Consider the partial module setting with inputs $x \in S$ and perturbations $\delta x$ that satisfy Assumption~\ref{assumption:perturbations}. Let functions $f_i(x,\delta x)=M_i(x+\delta x)-M_i(x)$ 
satisfy Assumption~\ref{assumption:sup_exp_f}. 

Let $\mathcal{O}_i: \mathcal{M} \times \mathbb{R}^n \times \mathbb{R}^n\times\mathbb{R} \rightarrow\{0,1\}$ be a testing map:
$$
    \mathcal{O}_i(M_i, x, \delta x,\epsilon_i)= \begin{cases}1 & |f_i(x,\delta x)| \leq \epsilon_i \\ 0 & \text { otherwise. }\end{cases}
$$
 Then for the partial module, if $\epsilon_i \geq C_i$, with probability over $\delta x$,
\begin{align*}
    P&(\mathcal{O}_i(M_i, x, \delta x,\epsilon_i)=1 \text { for all } x \in S)  \\
    &\geq 1-2 m \exp \left(-\frac{(\epsilon_i-C_i)^2}{2 n(c \gamma_i)^2}\right).
\end{align*}

\end{theorem}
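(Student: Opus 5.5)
The plan is to apply McDiarmid's bounded-differences inequality for each fixed $x\in S$, and then take a union bound over the $m$ elements of $S$.

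Fix $x\in S$ and view $g(\delta x_1,\dots,\delta x_n)=f_i(x,\delta x)=M_i(x+\delta x)-M_i(x)$ as a function of the $n$ independent coordinates guaranteed by Assumption~\ref{assumption:perturbations}. Suppose we change a single coordinate $\delta x_j$ to $\delta x_j'$, with both in $[-c,c]$. By the $\gamma_i$-Lipschitz property, the value of $g$ moves by at most $\gamma_i|\delta x_j-\delta x_j'|\le 2c\gamma_i$. So $g$ has bounded differences $c_j=2c\gamma_i$, and $\sum_j c_j^2=4nc^2\gamma_i^2$. McDiarmid then gives, for any $t>0$,
\[
P\bigl(|g-\mathbb{E}g|\ge t\bigr)\le 2\exp\left(-\frac{2t^2}{4n c^2\gamma_i^2}\right)=2\exp\left(-\frac{t^2}{2n(c\gamma_i)^2}\right).
\]
This is exactly the exponent appearing in the statement.

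Next we pass from deviations about the mean to deviations about zero. By Assumption~\ref{assumption:sup_exp_f}, $|\mathbb{E}_{\delta x} f_i(x,\delta x)|\le C_i$. If $|f_i|>\epsilon_i$, then $|f_i-\mathbb{E}f_i|\ge |f_i|-|\mathbb{E}f_i|>\epsilon_i-C_i$. Taking $t=\epsilon_i-C_i\ge 0$, which is allowed because $\epsilon_i\ge C_i$, we get
\[
P\bigl(\mathcal{O}_i(M_i,x,\delta x,\epsilon_i)=0\bigr)\le 2\exp\left(-\frac{(\epsilon_i-C_i)^2}{2n(c\gamma_i)^2}\right).
\]
When $\epsilon_i=C_i$ the bound is trivial, so nothing more is needed in that case.

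Finally, a union bound over the $m$ (multi-set) elements of $S$ bounds the probability that some $x$ fails by $2m\exp(\cdot)$. Taking the complement gives the claim. This step holds regardless of how the events for different $x$ depend on each other, whether a single shared $\delta x$ is used or independent draws per $x$, so no i.i.d. assumption on $S$ is needed.

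The only delicate point is the bounded-difference constant. One must use the Euclidean norm on a single-coordinate change, so that it equals $|\delta x_j-\delta x_j'|\le 2c$. The factor $4$ from $(2c\gamma_i)^2$ then combines with McDiarmid's $2t^2$ to produce the stated $2n(c\gamma_i)^2$.
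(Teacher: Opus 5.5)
Your proof is correct and follows the paper's argument essentially step for step. Both use bounded differences $2c\gamma_i$ from the Lipschitz property, McDiarmid's inequality, the triangle inequality with $|\mathbb{E}_{\delta x} f_i|\le C_i$ to pass from deviations about the mean to $|f_i|$ exceeding $\epsilon_i=\xi+C_i$, and a union bound over the $m$ elements of $S$; your contrapositive phrasing of the triangle-inequality step and your remark on the degenerate case $\epsilon_i=C_i$ are cosmetic (and the ``must use the Euclidean norm'' caveat is stronger than needed, since any $\ell_p$ norm of a single-coordinate change equals $|\delta x_j-\delta x_j'|$).
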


\begin{proof}
Based on the setting, for the partial module $i$, the function $f_i(x,\delta x)$ have the following bounded difference property,
\begin{align*}
\sup _{\delta x_j \in \delta x}&|f_i\left(x,\delta x_1, \ldots, \delta x_j, \ldots, \delta x_{n}\right) \\
&-f_i\left(x, \delta x_1, \ldots, \delta x_j^{\prime}, \ldots, \delta x_{n}\right)| 
\leq 2 c \gamma_i
\end{align*}
since $M_i$ is $\gamma_i$-Lipschitz.  McDiarmid's inequality implies that the following bound holds true:
\begin{equation}\label{eq:proof_mcdiarmid_bound}
\begin{split}
    P(\mid f_i(x,\delta x)-&\mathbb{E}\left[f_i(x,\delta x)\right] \mid \geq \xi) \\
    & \leq 2 \exp \left(-\frac{\xi^2}{2 n(c \gamma_i)^2}\right).
\end{split}
\end{equation}

Let us consider the event
$$
    A: \ \mid f_i(x,\delta x)- \mathbb{E}\left[f_i(x,\delta x)\right] \mid \geq \xi.
$$

According to Assumption~\ref{assumption:sup_exp_f},
$$
    \left|\mathbb{E}_{\delta x}\left[f_i(x,\delta x)\right]\right| \leq C_i.
$$
Hence if the event "not $A$" occurs, that is 
if $\mid f_i(x,\delta x)-\mathbb{E}\left[f_i(x,\delta x)\right] \mid <\xi$, then via triangular inequality,
\begin{align*}
	\left|f_i(x,\delta x)\right| 
	&\leq \left|f_i(x,\delta x)-\mathbb{E}\left[f_i(x,\delta x)\right]\right|
		\\ 
    & +\left|\mathbb{E}[f_i(x,\delta x)]\right| <\xi+C_i.
\end{align*}

Let $C$ be the event
\[
\left|f_i(x,\delta x)\right|  \leq \xi+C_i. 
\]
The event "not $A$" implies that $C$ occurs. Therefore $P(\mathrm{not} \ A) \leq P(C)$. In other words,
\begin{equation}\label{eq:event_C_bound}
\begin{split}
& P\left(\left|f_i(x,\delta x)\right|<\xi + C_i\right) \geq \\
& P(\mid f_i(x,\delta x)-\mathbb{E}\left[f_i(x,\delta x)\right] \mid < \xi).
\end{split}
\end{equation}
According to (\ref{eq:proof_mcdiarmid_bound})
\begin{equation}\label{eq:not_A}
\begin{split}
    P(\mid &f_i(x,\delta x)-\mathbb{E}\left[f_i(x,\delta x)\right] \mid < \xi) \\
    & \geq 1 - 2 \exp \left(-\frac{\xi^2}{2 n(c \gamma_i)^2}\right).    
\end{split}
\end{equation}

Letting $\epsilon_i = \xi + C_i$, recalling that $\epsilon_i\geq C_i$, and combining (\ref{eq:event_C_bound}) with (\ref{eq:not_A}) results in:
$$
	P\left(\left|f_i(x,\delta x)\right|<\epsilon_i\right) 
	\geq 1-2 \exp \left(-\frac{(\epsilon_i-C_i)^2}{2 n(c \gamma_i)^2}\right).
$$
Given that the cardinality of the multi-set $S$ is $m$, the application of the union bound yields:
\begin{align*}
    P(\left|f_i(x,\delta x)\right|&<\epsilon_i, \text { for all }  x \in S) \\
    &\geq 1-2 m \exp \left(-\frac{(\epsilon_i-C_i)^2}{2 n(c \gamma_i)^2}\right).
\end{align*}
This completes the proof.
\end{proof}

For a linear partial module, a simple corollary is presented below in Appendix~\ref{appendix:corollary-linear_mcdiarmid}.

\begin{remark}[Perturbation-induced bias, Lipschitz]\label{remark:ep_c} \normalfont 
$C_i$ captures the systematic bias in the output shift; it is the lower bound on the magnitude of the certificate. For fixed $\epsilon_i$, decreasing $C_i$ increases the confidence exponentially. For fixed target probability $P(\mathcal{O}_i=1,\forall x \in S)= 1-\phi_i$ where $\phi_i\in(0,1]$ we have,
\begin{equation}\label{eqn:ep_c}
    \epsilon_i(\phi_i) = C_i + c\gamma_i\sqrt{2n\ln\left(\frac{2m}{\phi_i}\right)}.
\end{equation}
Reducing $C_i$ leads to a linear decrease in $\epsilon_i(\phi_i)$. Therefore, controlling $C_i$ leads to a linear control of the certified radius and an exponential gain in probability of certification for a fixed radius. It is therefore a vital factor for improving certified robustness.
\end{remark}

\begin{remark} \normalfont
Eqn.~\ref{eqn:ep_c} also captures the relationship between robustness verification multi-set $S$'s size $m$ and the Lipschitz constant $\gamma_i$. The robustness radius $\epsilon_i$ will scale with the size $m$ of the multi-set, but increase slowly with the square root of the natural logarithm. This can be balanced by adjusting the Lipschitz constant $\gamma_i$ (e.g., via direct scaling). On the other hand, $\epsilon_i$ scales directly with the Lipschitz constant $\gamma_i$ and therefore is essential to control for improving certified robustness.
\end{remark}

We can also utilize the classical Chebyshev's inequality, which reveals another crucial parameter.  

\begin{assumption}\label{assumption:basic_chebyshev}

Let for each partial module, function $f_i:\mathbb{R}^{n}\times\mathbb{R}^n\rightarrow\mathbb{R}$ measure the difference between output attributes of base and perturbed inputs, i.e. for the $i$-th partial module,
$$
	f_i(x,\delta x)=\mathcal{M}_i(x+\delta x)-\mathcal{M}_i(x).
$$
Moreover, for each  partial module, there exist and are known upper bounds $C_i,V_i$:
$$
\sup _{x \in S} \left|\mathbb{E}_{\delta x}\left[f_i(x,\delta x)\right]\right| \leq C_i, \ i=1,\dots,d,
$$
$$
\sup _{x \in S} \left|\operatorname{Var}_{\delta x}\left[f_i(x,\delta x)\right]\right| \leq V_i, \ i=1,\dots,d.
$$

\end{assumption}

\begin{theorem}[Robustness certificate 2]\label{thm:basic_chebyshev}

Consider any partial module setting with inputs $x \in S$. Let functions $f_i(x,\delta x)=M_i(x+\delta x)-M_i(x)$ satisfy Assumption~\ref{assumption:basic_chebyshev}. 

Let $\mathcal{O}_i: \mathcal{M} \times \mathbb{R}^n \times \mathbb{R}^n\times\mathbb{R} \rightarrow\{0,1\}$ be a testing map:
$$
    \mathcal{O}_i(M_i, x, \delta x, 
    \epsilon_i)= \begin{cases}1 & |f_i(x,\delta x)| \leq \epsilon_i \\ 0 & \text { otherwise. }\end{cases}
$$
Then for the partial module, if $\epsilon_i \geq C_i$, with probability over $\delta x$,

\begin{align*}
P(\mathcal{O}_i(M_i, & x, \delta x,\epsilon_i)=1 \text { for all } x \in S)  \\
&\geq 1 - m\frac{V_i}{(\epsilon_i - C_i)^2}.
\end{align*}

\end{theorem}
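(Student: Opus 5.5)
The plan is to follow the proof of Theorem~\ref{thm:partial_mcdiarmid} step by step. The one change is that McDiarmid's inequality is replaced by Chebyshev's inequality applied to $f_i(x,\delta x)$ for each fixed $x\in S$. No Lipschitz property or bounded-difference condition is needed. The only inputs are the mean bound $C_i$ and the variance bound $V_i$ from Assumption~\ref{assumption:basic_chebyshev}. This is also why the theorem is stated for "any partial module setting".

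\textbf{Step 1: one fixed input.} Fix $x\in S$ and set $\xi=\epsilon_i-C_i$. If $\epsilon_i=C_i$, the claimed lower bound is vacuous (read as $-\infty$ or trivial), so assume $\xi>0$. Chebyshev's inequality, together with $\operatorname{Var}_{\delta x}[f_i(x,\delta x)]\le V_i$, gives
\[
P\bigl(|f_i(x,\delta x)-\mathbb{E}[f_i(x,\delta x)]|\ge \xi\bigr)\le \frac{V_i}{\xi^2}.
\]
This requires $f_i(x,\cdot)$ to have finite variance, which the assumption implicitly guarantees.

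\textbf{Step 2: add back the bias.} On the complementary event, the triangle inequality and $|\mathbb{E}[f_i(x,\delta x)]|\le C_i$ give $|f_i(x,\delta x)|<\xi+C_i=\epsilon_i$, so $\mathcal{O}_i(M_i,x,\delta x,\epsilon_i)=1$. This is exactly the event inclusion (\ref{eq:event_C_bound}) from the earlier proof. Therefore
\[
P\bigl(\mathcal{O}_i(M_i,x,\delta x,\epsilon_i)=0\bigr)\le \frac{V_i}{(\epsilon_i-C_i)^2}.
\]

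\textbf{Step 3: all of $S$ at once.} Take a union bound over the $m$ elements of the multi-set $S$; repeated elements only make the bound looser. The probability that some $x\in S$ fails is at most $mV_i/(\epsilon_i-C_i)^2$. Taking complements yields the stated bound. As in Theorem~\ref{thm:partial_mcdiarmid}, the union bound means no independence across the $x$'s is needed, and the same $\delta x$ may be shared across inputs.

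There is no real obstacle here. The only points needing care are the degenerate case $\epsilon_i=C_i$ and the finite-variance requirement for Chebyshev's inequality. I would also note that setting the right-hand side equal to $1-\phi_i$ and solving gives $\epsilon_i(\phi_i)=C_i+\sqrt{V_i m/\phi_i}$, which is (\ref{eq:robustness_bias_v}). This prepares the subsequent remark on the role of $C_i$.
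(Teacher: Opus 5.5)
Your proposal is correct and matches the paper's proof: Chebyshev's inequality for each fixed $x$, the triangle inequality with $|\mathbb{E}[f_i]|\le C_i$ to get $|f_i|<\epsilon_i$ where $\epsilon_i=t+C_i$, and a union bound over the $m$ elements of $S$. Your explicit treatment of the degenerate case $\epsilon_i=C_i$ is a small improvement: the statement includes that case, but the paper's proof, which needs $t>0$, silently excludes it.
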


\begin{proof}
Based on Chebyshev inequality,

\begin{align*}
    &P\left(|f_i(x,\delta x) - \mathbb{E}\left[f_i(x,\delta x)\right] | \geq t\right)\\
& \quad \quad  \quad \quad \quad \leq \frac{\operatorname{Var}[f_i(x,\delta x)]}{t^2}
\end{align*}
for any $t>0$. Using triangular inequality like in the proof of Theorem~\ref{thm:partial_mcdiarmid}, we can let $\epsilon_i = t+C_i$, and with $\epsilon_i\geq C_i$
$$
    P\left(|f_i(x,\delta x)| \geq \epsilon_i \right)
    \leq \frac{\operatorname{Var}[f_i(x,\delta x)]}{(\epsilon_i - C_i)^2}.
$$
Given the cardinality of the multi-set $S$ is $m$ and $\operatorname{Var}[f_i(x,\delta x)]\leq V_i$ for $x\in S$, the application of the union bound yields,
\begin{align*}
&P\left(\left|f_i(x,\delta x)\right|<\epsilon_i, \text { for all }  x \in S\right)  \\
&\quad \geq 1-m \frac{V_i}{\left(\epsilon_i-C_i\right)^2}.
\end{align*}
This completes the proof.
\end{proof}

\begin{remark}[{Perturbation-induced} bias, variance] \label{remark:cheby}\normalfont
Theorem~\ref{thm:basic_chebyshev} places no direct assumption on the magnitude of noise $\delta x$ or the Lipschitz constant $\gamma_i$ of the partial module. Intuitively, it suggests that by controlling both the supremum related to expectation $C_i$ and variance $V_i$ of change (across perturbations), we improve the probability of certification for fixed $\epsilon_i$. For a fixed target  bound on the probability $P\left(\mathcal{O}_i=1, \forall x \in S\right)$:
\[
P\left(\mathcal{O}_i=1, \forall x \in S\right)\geq =1-\phi_i
\]
where $\phi_i\in(0,1]$, we have:
\begin{equation}
    \epsilon_i(\phi_i) = C_i + \sqrt{\frac{V_i m}{\phi_i}}.
\end{equation}
Decreasing $C_i$ results in a linear decrease in $\epsilon_i(\phi_i)$, and decreasing $V_i$ reduces the $\epsilon_i(\phi_i)$ proportional to the square root of $V_i$. It is also important to note that the required tolerance grows with the square root of the size $m$ of the multi-set $S$, which will become a significant limitation for a large set $S$ unless $V_i$ is controlled.
\end{remark}

\subsection{Linear partial module}
\label{appendix:corollary-linear_mcdiarmid}

\begin{corollary}\label{cor:linear_mcdiarmid}

Consider the partial module with inputs $x \in S$ and perturbations $\delta x$ that satisfy Assumption~\ref{assumption:perturbations}. If the $i$-th partial module is linear, 
$$
	M_i(x) = W_i x + b,\;\;W_i\in\mathbb{R}^{1\times n}, b\in\mathbb{R}
$$
If an oracle $\mathcal{O}_i: \mathcal{M} \times \mathbb{R}^n \times \mathbb{R}^n \rightarrow\{0,1\}$ is available for each partial module, then for the partial module, if $\epsilon_i \geq C_i$, with probability over $\delta x$, 
\begin{align*}
	P(\mathcal{O}_i&(M_i(\cdot), x, \delta x)=1 \text { for all } x \in S) \\
	&\geq 1-2 m \exp \left(-\frac{(\epsilon_i-C_i)^2}{2 n(c\|W_i\|)^2}\right)
\end{align*}
    
\end{corollary}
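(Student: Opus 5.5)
The plan is to derive Corollary~\ref{cor:linear_mcdiarmid} directly from Theorem~\ref{thm:partial_mcdiarmid}, by identifying the Lipschitz constant of a linear partial module. For $M_i(x)=W_i x+b$ we have $|M_i(x)-M_i(y)| = |W_i(x-y)| \le \|W_i\|\,\|x-y\|$ by Cauchy--Schwarz, where $\|W_i\|$ is the Euclidean norm of the row vector $W_i$. So $M_i$ is $\gamma_i$-Lipschitz with $\gamma_i=\|W_i\|$. We interpret the oracle $\mathcal{O}_i$ as the testing map of Theorem~\ref{thm:partial_mcdiarmid} with threshold $\epsilon_i$, and take $C_i$ as in Assumption~\ref{assumption:sup_exp_f}.

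Next we record the structure of $f_i$ in the linear case: $f_i(x,\delta x)=W_i\delta x=\sum_j W_{ij}\delta x_j$, which is independent of $x$ and of the offset $b$. Two consequences follow.
\begin{itemize}
\item The expectation $\mathbb{E}[f_i]=\sum_j W_{ij}\mathbb{E}[\delta x_j]$ is the same constant for every $x\in S$, so the bias $C_i$ from Assumption~\ref{assumption:sup_exp_f} is simply $|W_i\,\mathbb{E}[\delta x]|$. It vanishes for zero-mean perturbations.
\item The bounded-difference constant in McDiarmid's inequality can be taken coordinatewise as $2c|W_{ij}|$.
\end{itemize}
The second point would in fact give the sharper exponent $(\epsilon_i-C_i)^2/(2c^2\|W_i\|^2)$, without the factor $n$. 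Since $\|W_i\|^2\le n\|W_i\|^2$, this sharper bound implies the stated one. Alternatively, the uniform constant $2c\gamma_i=2c\|W_i\|$ used in the proof of Theorem~\ref{thm:partial_mcdiarmid} yields the stated bound verbatim.

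Substituting $\gamma_i=\|W_i\|$ into the conclusion of Theorem~\ref{thm:partial_mcdiarmid}, and keeping the hypothesis $\epsilon_i\ge C_i$, gives $1-2m\exp\bigl(-(\epsilon_i-C_i)^2/(2n(c\|W_i\|)^2)\bigr)$, which is exactly the claim. The union bound over the $m$ elements of $S$ is inherited unchanged from the theorem.

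There is no real obstacle here. The only points needing care are consistency checks. First, the norm used on $W_i$ must be the dual of the Euclidean norm on inputs, which is again Euclidean, so the Lipschitz step is valid. Second, the statement's oracle notation, which omits $\epsilon_i$, must be read as the same testing map as in Theorem~\ref{thm:partial_mcdiarmid}.
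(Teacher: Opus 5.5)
Your proposal is correct and follows the paper's proof, which simply reruns the argument of Theorem~\ref{thm:partial_mcdiarmid} with the Lipschitz constant $\gamma_i=\|W_i\|$ of the linear module. Your side remark is also correct: using coordinatewise bounded differences $2c|W_{ij}|$ in McDiarmid's inequality removes the factor $n$ from the exponent, which gives a strictly stronger bound that implies the stated one.
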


\begin{proof}
The proof follows from proof of Theorem~\ref{thm:partial_mcdiarmid}, but since the partial module is linear, $\gamma_i=\left\|W_i\right\|$.
\end{proof}

\section{Bias correction}
\label{sec:bias-correction}

For control of $C_i$, a simple approach is to add a bias $b_i\in\mathbb{R}$ to the partial module $M_i(x)$ , which gives us a corrected module,
$$
    M_i^*(x) = M_i(x) - b_i.
$$
Define function $f^*_i:\mathbb{R}^{n}\rightarrow\mathbb{R}$ that measures the difference between output attributes of base inputs from the original partial module and perturbed inputs from the corrected module, i.e. for the $i$-th corrected partial module,
\begin{equation}
    f_i^\ast(x,\delta x)={M}_i^{\ast}(x+\delta x)-{M}_i(x).
\end{equation}

\begin{assumption}\label{assumption:sup_exp_f_start}
For each corrected partial module, there exists known upper bounds $C_i^\ast(b_i)\geq 0$:
$$
    \sup _{x \in S} \left|\mathbb{E}_{\delta x}\left[f_i(x,\delta x)\right] - b_i\right| \leq C^*_i(b_i).
$$
\end{assumption}

\begin{theorem}[Corrected partial module]
\label{thm:corrected_mcdiarmid}
Consider the partial modules $M_i\in\mathcal{M}, M_i^*\in\mathcal{M}^*$ which satisfy Assumption~\ref{assumption:sup_exp_f_start}, with inputs $x\in S$ and perturbations $\delta x$ that satisfy Assumption~\ref{assumption:perturbations}. 

Let $\mathcal{O}^*_i:\mathcal{M}\times\mathcal{M}^\ast\times \mathbb{R}^n\times \mathbb{R}^n\times\mathbb{R}\rightarrow \{0,1\}$ be a testing map: 
$$
    \mathcal{O}_i^\ast(M_i,M^\ast, x, \delta x,\epsilon_i)= \begin{cases}1 & |f_i^\ast(\delta x)| \leq \epsilon_i \\ 0 & \text { otherwise. }\end{cases}
$$
Then for $\epsilon_i\geq C_i^*(b_i)$ 
\begin{align*}
    P&(\mathcal{O}^*_i(M_i,M^\ast_i, x, \delta x,\epsilon_i) = 1, \forall x\in S) \\
    &\geq 1-2 m \exp \left(-\frac{(\epsilon_i-C_i^*(b_i))^2}{2 n(c \gamma_i)^2}\right)
\end{align*}

\end{theorem}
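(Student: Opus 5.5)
The plan is to reduce the corrected case to the proof of Theorem~\ref{thm:partial_mcdiarmid}. The key observation is that $f_i^\ast(x,\delta x)=M_i(x+\delta x)-b_i-M_i(x)=f_i(x,\delta x)-b_i$. Two consequences follow immediately. First, $\mathbb{E}_{\delta x}[f_i^\ast(x,\delta x)]=\mathbb{E}_{\delta x}[f_i(x,\delta x)]-b_i$, so Assumption~\ref{assumption:sup_exp_f_start} says exactly that $\sup_{x\in S}|\mathbb{E}_{\delta x}[f_i^\ast(x,\delta x)]|\leq C_i^\ast(b_i)$. Second, $f_i^\ast(x,\cdot)$ differs from $f_i(x,\cdot)$ only by a constant in $\delta x$, so $f_i^\ast-\mathbb{E}[f_i^\ast]=f_i-\mathbb{E}[f_i]$. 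In particular, the centred deviation is unchanged by the correction.

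First I will verify the bounded difference property for $f_i^\ast$. Changing a single coordinate $\delta x_j$ to $\delta x_j'$ alters $f_i^\ast$ by $|M_i(x+\delta x)-M_i(x+\delta x')|\leq \gamma_i|\delta x_j-\delta x_j'|\leq 2c\gamma_i$, because the constants $b_i$ and $M_i(x)$ cancel. McDiarmid's inequality then gives, for each fixed $x\in S$,
\[
P\bigl(|f_i^\ast(x,\delta x)-\mathbb{E}[f_i^\ast(x,\delta x)]|\geq\xi\bigr)\leq 2\exp\left(-\frac{\xi^2}{2n(c\gamma_i)^2}\right).
\]
This is the same bound as (\ref{eq:proof_mcdiarmid_bound}), since the centred quantity coincides with the one there.

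Next comes the triangle inequality step. On the complementary event, $|f_i^\ast|\leq|f_i^\ast-\mathbb{E}f_i^\ast|+|\mathbb{E}f_i^\ast|<\xi+C_i^\ast(b_i)$. Setting $\epsilon_i=\xi+C_i^\ast(b_i)$, which is admissible because $\epsilon_i\geq C_i^\ast(b_i)$, gives $P(|f_i^\ast(x,\delta x)|\leq\epsilon_i)\geq 1-2\exp(-(\epsilon_i-C_i^\ast(b_i))^2/(2n(c\gamma_i)^2))$ for each $x$. A union bound over the $m$ elements of the multi-set $S$ then yields the stated probability for the event $\mathcal{O}_i^\ast=1$ for all $x\in S$.

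There is no genuine obstacle here. The one point requiring care is to note that the Lipschitz constant governing the bounded differences is unchanged by subtracting $b_i$. One should also read the testing map's $f_i^\ast(\delta x)$ as $f_i^\ast(x,\delta x)$. The degenerate case $\epsilon_i=C_i^\ast(b_i)$ can be handled as in Theorem~\ref{thm:partial_mcdiarmid}: there the bound is trivial, since its right-hand side is $1-2m\leq 0$.
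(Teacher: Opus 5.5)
Your proposal is correct and follows the same route as the paper, whose proof simply defers to the argument for Theorem~\ref{thm:partial_mcdiarmid}. You make that deferral explicit: writing $f_i^\ast=f_i-b_i$ shows that the bounded-difference constant $2c\gamma_i$ is unchanged and that Assumption~\ref{assumption:sup_exp_f_start} is exactly the mean bound $C_i^\ast(b_i)$ needed for the McDiarmid, triangle-inequality and union-bound steps.
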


\begin{proof}

The proof follows immediately from the argument presented in the proof of Theorem~\ref{thm:partial_mcdiarmid}.
\end{proof}

\begin{remark}
For bias correction, the bias can also be input-dependent,
$$
    M_i^*(x) = M_i(x) - b_i(x)
$$
The rationale is that by optimizing the value of 
\[
\sup_{x\in S}|\mathbb{E}_{\delta x} f_i(x,\delta x) - b_i(x+\delta x)|,
\]
over $b_i\in\mathcal{J}$,  where $\mathcal{J}$ is a class of functions $\mathbb{R}^n\rightarrow\mathbb{R}$ containing all constant functions,
one can potentially achieve a more efficient compensation of $C_i$ and $V_i$. Indeed
\[
\begin{split}
    \inf_{b_i\in \mathcal{J}} \sup_{x\in S} &|\mathbb{E}_{\delta x} f_i(x,\delta x) - b_i(x+\delta x)| \\
    &\leq \inf_{\theta_i\in \mathbb{R}} \sup_{x\in S}|\mathbb{E}_{\delta x} f_i(x,\delta x) - \theta_i|\\
    & \leq \sup_{x\in S}|\mathbb{E}_{\delta x} f_i(x,\delta x)|.
\end{split}
\]
This provides significantly more flexible bias correction.
\end{remark}

\section{General Setting}
\label{sec:appendix-general-setting}

In the prior sections on the simple setting, we worked on partial modules $M_i$ with any product perturbation distribution with bounded support and coordinate-wise independent perturbations. From this section onward, we generalise to perturbation distribution $D$ that can exhibit dependencies. 

In order to develop certificates accounting for model performance in a given task, we focus on the general setting with class or token margins.

\subsection{Data model}

Consider $(x,y)\sim\mathcal{X}\times\mathcal{Y}$ with data $x\in\mathbb{R}^n$ and label $y\in\{1,\dots,Q\}$ for a classification problem with $Q$ classes.

Data can be inputs into the model, be immediate outputs of the LLM’s tokenizer, outputs of model’s intermediate layers, or outputs from the penultimate layer of the model.

\subsection{Model of perturbations}

Perturbations to input data $(x,y)$ are assumed to be random variables $\delta x$, whose components are dependent. More formally, for an original data vector $x$, a perturbed data is
\[
\hat{x}=x+\delta x,
\]
where $\delta x $ is sampled from any distribution $D$ on  $\mathbb{R}^n$, including the ones that may be dependent on $x$ and $y$.

Fix a box in $\mathbb{R}^n$, 
 $$
\mathcal{B} := \prod_{i=1}^n [l_i,u_i]
$$
with $u_i-l_i>0$, and volume $V_{\mathcal{B}}:= \prod_{i=1}^n (u_i-l_i)$.

\begin{assumption}\label{assumption:noise_dependent}
For fixed $x$ and box $\mathcal{B}$, there exist constants $C\geq 0$, $\epsilon_\mathcal{B}\in(0,1)$ such that for any $A\subset \mathbb{R}^n$ 
$$
    P_D(A\cap \mathcal{B})  \leq CV_{\mathcal{B}} P_u(A\cap\mathcal{B}),
$$
where $P_u$ is the uniform product distribution on $\mathcal{B}$, and
$$
    P_D(A\cap\mathcal{B}^c) \leq \epsilon_\mathcal{B}.
$$
\end{assumption}

Assumption \ref{assumption:noise_dependent}, in essence, requires that the distribution $P_D$, if supported on a bounded domain, does not have pathological concentrations in small volumes over sufficiently large domains in its support represented by the box $\mathcal{B}$. If the support of $P_D$ is not bounded, then this assumption requires that the "bulk" of the measure is inside  $\mathcal{B}$. This is controlled by the value of $\epsilon_{\mathcal{B}}$.

\begin{lemma}\label{lemma:noise_dependent}
Fix $x$ and a box $\mathcal{B}$, and suppose that Assumption~\ref{assumption:noise_dependent} holds with constants $C$ and  $\epsilon_\mathcal{B}$. Then for any $A\subset \mathbb{R}^n$ the following bound holds true:
$$
    P_D(A)  \leq CV_{\mathcal{B}} P_u(A) + \epsilon_\mathcal{B}
$$
where $P_u$ is the uniform product  distribution on $\mathcal{B}$
\end{lemma}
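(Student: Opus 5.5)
The plan is to split an arbitrary measurable set $A\subset\mathbb{R}^n$ along the box $\mathcal{B}$ and its complement, and to control the two pieces separately with the two halves of Assumption~\ref{assumption:noise_dependent}. Since $A\cap\mathcal{B}$ and $A\cap\mathcal{B}^c$ are disjoint and their union is $A$, finite additivity of $P_D$ gives
\[
P_D(A)=P_D(A\cap\mathcal{B})+P_D(A\cap\mathcal{B}^c).
\]

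The first term will be bounded using the density-type condition $P_D(A\cap\mathcal{B})\leq CV_{\mathcal{B}}P_u(A\cap\mathcal{B})$. The second term will be bounded by the tail condition $P_D(A\cap\mathcal{B}^c)\leq\epsilon_{\mathcal{B}}$. Adding these gives $P_D(A)\leq CV_{\mathcal{B}}P_u(A\cap\mathcal{B})+\epsilon_{\mathcal{B}}$.

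The last step replaces $P_u(A\cap\mathcal{B})$ with $P_u(A)$. Because $P_u$ is the uniform product distribution supported on $\mathcal{B}$, we have $P_u(\mathcal{B}^c)=0$, and hence $P_u(A)=P_u(A\cap\mathcal{B})$. Monotonicity, $P_u(A\cap\mathcal{B})\leq P_u(A)$, would also suffice, since $CV_{\mathcal{B}}\geq 0$. Either way we obtain the stated inequality $P_D(A)\leq CV_{\mathcal{B}}P_u(A)+\epsilon_{\mathcal{B}}$.

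There is no real obstacle. The only points needing care are measurability, since $A$ should be taken Borel so that all probabilities are defined, and the observation that $P_u$ places no mass outside $\mathcal{B}$. Nothing from the earlier theorems is needed; this is purely measure-theoretic bookkeeping.
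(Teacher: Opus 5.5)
Your proposal is correct and matches the paper's proof: it splits $A$ into $A\cap\mathcal{B}$ and $A\cap\mathcal{B}^c$, bounds each piece with the corresponding half of Assumption~\ref{assumption:noise_dependent}, and then uses the fact that $P_u$ is supported on $\mathcal{B}$ to replace $P_u(A\cap\mathcal{B})$ by $P_u(A)$. Your added points, that $A$ should be Borel and that monotonicity with $CV_{\mathcal{B}}\geq 0$ would also do the last step, are harmless refinements.
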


\begin{proof}
Since from Assumption~\ref{assumption:noise_dependent} we know that $P_D(A\cap \mathcal{B})  \leq CV_{\mathcal{B}} P_u(A\cap\mathcal{B})$ and $P_D(A\cap\mathcal{B}^c) \leq \epsilon_\mathcal{B}$, by the law of total probability,
$$
\begin{aligned}
    P_D(A) 
    &= P_D(A\cap \mathcal{B}) + P_D(A\cap \mathcal{B}^c) \\
    &\leq CV_{\mathcal{B}}P_u(A\cap \mathcal{B}) + \epsilon_\mathcal{B}
\end{aligned}
$$
where $P_u(A\cap\mathcal{B}) = P_u(A)$ since $P_u$ is supported on $\mathcal{B}$, which completes the proof.
\end{proof}

\subsection{Margins}

Consider linear mappings $M_i: \mathbb{R}^n \rightarrow \mathbb{R}$ for classes $i=1,...,Q$:  
$$
	M_i(x) = \langle w_i, x\rangle +b_i.
$$
The geometric margin associated with $M_i$ and a label $y\in\mathcal{Y}$, $y\neq i$ in Euclidean space is:
$$
	\rho(x,y) = \min_{i\neq y} \frac{\langle w_y-w_i,x\rangle + (b_y-b_i)}{\|w_y-w_i\|}.
$$

\subsection{Margin shift due to noise}

Define the shift in margin due to noise as,
$$
	f(x,y,\delta x) = \rho(x+\delta x,y) - \rho(x,y)
$$

\begin{assumption}\label{assumption:margin_shift_range}
For the given $(x,y)$ and $\mathcal{B}$, the shift in margin lies within a fixed effective range,
\begin{equation}
	f(x,y,\delta x) \in[-c(x,y),c(x,y)]
\end{equation}
where
$$
    c(x,y):= \sup_{\delta x\in \mathcal{B}} |f(x,y,\delta x)|.
$$
\end{assumption}

\subsection{Per-example statistics}
Define for a fixed $(x,y)$, under uniform distribution $u$
$$
\begin{aligned}
	\mu_u(x,y) & :=\mathbb{E}_{u}\left[f(x, y, \delta x)\right] \\
	\nu_u(x,y) & :=\operatorname{Var}_{u}\left[f(x, y, \delta x)\right]
\end{aligned}
$$
Condition for correct classification is $\rho(x,y)>0$ for an unperturbed example and  $\rho(x+\delta x,y) = \rho(x,y) + f(x,y,\delta x)>0$ for a perturbed example.

\section{Generalization of results from the simple setting}
\label{sec:appendix-general-bridge}

Here we show how results presented in  Appendix~\ref{sec:appendix-simple}-\ref{sec:bias-correction} for the simple settings (bounded support and coordinate-wise independent perturbations)  can be generalized to the general setting, where the true perturbation distribution $D$ can exhibit dependencies and can also depend on $(x,y)$. 

Assumption~\ref{assumption:noise_dependent} and Lemma~\ref{lemma:noise_dependent} provide the tools for this generalization by introducing a box $\mathcal{B}$ and a product reference distribution $P_u$ which can be the uniform product distribution on $\mathcal{B}$. According to Lemma \ref{lemma:noise_dependent}, for any event $A$, the relationship between the true and unknown distribution $P_D$ and the reference product distribution $P_u$ can be expressed as:
$$
    P_D(A) \leq CV_\mathcal{B} P_u(A) + \epsilon_\mathcal{B}.
$$
Then any certificate from the simple setting  bounding the probability of a failure event under some reference distribution $P_u$ can be immediately transferred to the true distribution. In particular, for any target failure ratio $\phi\in(0,1)$ under true perturbation distribution $D$, it is sufficient to request that the following bound holds true:
$$
    P_u(A) \leq \frac{\phi-\epsilon_\mathcal{B}}{CV_\mathcal{B}}.
$$
To illustrate this possibility, we present an example of a possible generalization from the  simple-setting robustness certificate presented in the earlier Section~\ref{sec:appendix-simple-certificates}.

Consider Theorem \ref{thm:partial_mcdiarmid}. Let $A$ be the event that
\[
    \mathcal{O}_i(M_i, x, \delta x,\epsilon_i) = 0, \text{for some } x\in S
\]
According to this theorem,
\[
    P_u(A) \leq 2  \exp \left(-\frac{(\epsilon_i-C_i)^2}{2 n(c \gamma_i)^2}\right)
\]
where the box is defined as the hypercube $\mathcal{B}=[-c,c]^n$. By Assumption~\ref{assumption:noise_dependent} and Lemma~\ref{lemma:noise_dependent}, 
\[
    P_D(A)\leq 2 C V_\mathcal{B}   \exp \left(-\frac{(\epsilon_i-C_i)^2}{2 n(c \gamma_i)^2}\right) + \epsilon_\mathcal{B}
\]
Hence 
\[
\begin{aligned}
    P(\mathcal{O}_i&(M_i, x, \delta x,\epsilon_i)=1 \text { for all } x \in S)  \\
        &\geq 1-2 C V_\mathcal{B}  m \exp \left(-\frac{(\epsilon_i-C_i)^2}{2 n(c \gamma_i)^2}\right) - m\epsilon_\mathcal{B}.
\end{aligned}
\]
Other statements from the earlier Sections~\ref{sec:appendix-simple}-\ref{sec:bias-correction} can be generalized in the same manner.

In the next section we present further results enabling explicit certification of robustness for classification and token prediction tasks in terms of margins.

\section{Certificates for margins in the general setting}
\label{sec:appendix-certificates}

\begin{theorem}\label{thm:boxed_samplewise_pert}
Suppose Assumptions~\ref{assumption:noise_dependent} and \ref{assumption:margin_shift_range} hold true with parameters $\epsilon_\mathcal{B}\in(0,1)$, $c(x,y)$, $C$ and $V_{\mathcal{B}}$ for a fixed example $(x,y)$. Furthermore, let
$$
    \rho(x,y)> -\mu_u(x,y) + \min \left\{ \xi_H(x,y), \xi_B(x,y) \right\},
$$
where
$$
\begin{aligned}
    \xi_H(x,y) &= c(x,y)\sqrt{2\kappa} \\
    \xi_B(x,y) &= \gamma(x,y) + \sqrt{\gamma(x,y)^2 + 2 \kappa \nu_u(x,y)} 
\end{aligned}
$$
and
$$
    \kappa := \ln \left(\frac{CV_{\mathcal{B}}}{\phi - \epsilon_\mathcal{B}}\right), \;
    \gamma := \frac{2}{3}c(x,y)\kappa.
$$
Finally, let $\phi\in(0,1)$ and $\phi>\epsilon_\mathcal{B}$.

Then the classification on the noisy example is correct with probability at least $1-\phi$,
\begin{equation}\label{eqn:pert-misclassifcation-bound}
    P_D(\rho(x+\delta x,y)>0)\geq 1-\phi.
\end{equation}

\end{theorem}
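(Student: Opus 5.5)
The plan is to bound the misclassification probability under $P_u$ by a one-sided concentration inequality and then transfer it to $P_D$ through Lemma~\ref{lemma:noise_dependent}. Let $A$ be the failure event $\{\delta x:\rho(x+\delta x,y)\le 0\}$. Since $\rho(x+\delta x,y)=\rho(x,y)+f(x,y,\delta x)$, we have
\[
A=\{f(x,y,\delta x)-\mu_u(x,y)\le -(\rho(x,y)+\mu_u(x,y))\}.
\]
Set $t:=\rho(x,y)+\mu_u(x,y)$. By hypothesis $t>\min\{\xi_H,\xi_B\}\ge 0$, so $A$ is a lower-tail deviation event of size $t$ for $f$ about its $P_u$-mean. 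Lemma~\ref{lemma:noise_dependent} gives $P_D(A)\le CV_{\mathcal{B}}P_u(A)+\epsilon_{\mathcal{B}}$. It therefore suffices to show
\[
P_u(A)\le \frac{\phi-\epsilon_{\mathcal{B}}}{CV_{\mathcal{B}}}=e^{-\kappa},
\]
where the equality is just the definition of $\kappa$. This is well defined because $\phi>\epsilon_{\mathcal{B}}$.

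Under $P_u$ the perturbation $\delta x$ is supported on $\mathcal{B}$. By Assumption~\ref{assumption:margin_shift_range}, $f$ then takes values in $[-c,c]$ with $c=c(x,y)$. I would prove $P_u(A)\le e^{-\kappa}$ separately for each term of the minimum, since $t$ exceeds at least one of them.

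\textbf{Hoeffding branch.} The variable $f$ is bounded in an interval of length $2c$. Hoeffding's lemma gives a sub-Gaussian moment generating function with variance proxy $c^2$, hence $P_u(f-\mu_u\le -t)\le \exp(-t^2/(2c^2))$. If $t>\xi_H=c\sqrt{2\kappa}$, this bound is below $e^{-\kappa}$.

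\textbf{Bernstein branch.} The centred variable satisfies $|f-\mu_u|\le 2c$ and has variance $\nu_u$. Bernstein's inequality gives
\[
P_u(f-\mu_u\le -t)\le \exp\!\left(-\frac{t^2}{2\nu_u+\tfrac{2}{3}(2c)t}\right).
\]
This constant is not quite the paper's $\gamma=\tfrac23 c\kappa$. To match it, I would instead use the bound with range parameter $b=c$, applied to $|f-\mu_u|$ with one-sided control of the relevant tail, which yields the denominator $2\nu_u+\tfrac23 ct$. Requiring the exponent to be at least $\kappa$ gives the quadratic condition $t^2-\tfrac23 c\kappa t-2\kappa\nu_u\ge 0$. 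Its positive root is exactly $\gamma+\sqrt{\gamma^2+2\kappa\nu_u}=\xi_B$. So $t>\xi_B$ again gives $P_u(A)\le e^{-\kappa}$.

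Combining the two branches, $P_D(A)\le CV_{\mathcal{B}}e^{-\kappa}+\epsilon_{\mathcal{B}}=\phi$, which is \eqref{eqn:pert-misclassifcation-bound}.

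Two steps need care. The first is the bookkeeping of the transfer. Lemma~\ref{lemma:noise_dependent} refers to $P_u$ on $\mathcal{B}$, while Assumption~\ref{assumption:margin_shift_range} bounds $f$ only for $\delta x\in\mathcal{B}$. This is consistent, because the concentration step is carried out entirely under $P_u$ and the mass of $P_D$ outside $\mathcal{B}$ is absorbed by $\epsilon_{\mathcal{B}}$. The second, and the main obstacle, is getting the Bernstein constant to match $\gamma=\tfrac23 c\kappa$. This depends on whether the one-sided deviation of $f-\mu_u$ is bounded by $c$ or $2c$. If only $2c$ is available, I would expect the stated $\xi_B$ to need $\gamma=\tfrac43 c\kappa$, or an argument using $\mu_u\ge -c$ to bound the lower deviation appropriately. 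Either way the structure of the proof is unchanged.
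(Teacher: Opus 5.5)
Your overall route is the paper's: write the failure event as $\{f-\mu_u\le -t\}$ with $t=\rho(x,y)+\mu_u(x,y)$, bound its $P_u$-probability by $e^{-\kappa}$ using Hoeffding or Bernstein, and transfer to $P_D$ with Lemma~\ref{lemma:noise_dependent}. Your Hoeffding branch is correct. The Bernstein branch goes wrong where you abandon the inequality you first wrote down. The replacement by range parameter $c$ is not justified. For the lower tail you need an almost-sure upper bound on $\mu_u-f$, and the hypotheses only give $\mu_u-f\le \mu_u+c$. That exceeds $c$ whenever $\mu_u>0$, and nothing in the hypotheses rules this out. Even granting the replacement, your root is off by a factor of two: the positive root of $t^2-\tfrac23 c\kappa t-2\kappa\nu_u$ is $\tfrac{\gamma}{2}+\sqrt{\gamma^2/4+2\kappa\nu_u}$, not $\xi_B$.

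The ``obstacle'' you flag comes from this arithmetic slip. The positive root of $t^2-pt-q$ is $p/2+\sqrt{p^2/4+q}$. With the justified bound $|f-\mu_u|\le 2c$, requiring the Bernstein exponent to exceed $\kappa$ reads
\[
t^2-\tfrac43 c\kappa\, t-2\kappa\nu_u\ge 0,\quad\text{i.e.}\quad t^2-2\gamma t-2\kappa\nu_u\ge 0 .
\]
Its positive root is exactly $\gamma+\sqrt{\gamma^2+2\kappa\nu_u}=\xi_B$. So the paper's $\gamma=\tfrac23 c\kappa$ is precisely the standard Bernstein constant for a centred variable of range $2c$. Your closing remark that range $2c$ would force $\gamma=\tfrac43 c\kappa$ repeats the same mistake. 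Drop the detour, keep the first Bernstein bound you wrote, and the proof coincides with the paper's, which treats this branch only by ``applying the same logic''.
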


\begin{proof}
For any fixed $(x,y)$, define the misclassification event,
$$
    E := \{\delta x \in \mathbb{R}^n: \rho(x+\delta x,y)\leq 0\}.
$$
then 
$$
    P_D(\rho(x+\delta x, y)>0) = 1-P_D(E)
$$
so we want to prove 
\begin{equation}
    P_D(E)\leq \phi.    
\end{equation}

Let constant $C\geq 0$ and $\mathcal{B}$ satisfy Assumption~\ref{assumption:noise_dependent}. Then by Lemma~\ref{lemma:noise_dependent} the probability of the misclassifications event $E$ is bounded,
$$
    P_D(E)\leq CV_{\mathcal{B}} P_u(E) + \epsilon_{\mathcal{B}}.
$$

Pick a $\phi> \epsilon_\mathcal{B}$ and 
$$
    \kappa := \ln \left(\frac{CV_{\mathcal{B}}}{\phi - \epsilon_\mathcal{B}}\right).
$$
Then if we can show that
\begin{equation}\label{eqn:uniform-reduction}
    P_u(E)\leq e^{-\kappa},
\end{equation}
we consequently prove 
$$
    P_D(E) \leq CV_\mathcal{B} e^{-\kappa} + \epsilon_\mathcal{B} = \phi
$$
which implies 
$$
    P_D(\rho(x+\delta x, y)>0) > 1-\phi
$$
So the proof reduces to the proof of Eqn.~\ref{eqn:uniform-reduction}.

To prove Eqn.~\ref{eqn:uniform-reduction}, define random variable 
$$
    X: =  f(x,y,\delta x) = \rho(x+\delta x,y) - \rho(x,y)
$$
then 
$$
\begin{aligned}
P_u(E) 
    &= P_u(X\leq - \rho(x,y)) \\
    &= P_u(X - \mu_u(x,y) \leq -t)          
\end{aligned}
$$
with $t:= \rho(x,y) + \mu_u(x,y)$, and $\mu_u(x,y) = \mathbb{E}_u[X]$, $\nu_u(x,y) = \operatorname{Var}[X]$.

According to Assumption~\ref{assumption:margin_shift_range}, 
$$
    X\in[-c(x,y), c(x,y)]\
$$
for all $\delta x \in\mathcal{B}$. We 
can therefore use the Hoeffding's inequality for bounded random variable $X\in[-c,c]$,
$$
    P_u(E) = P_u(X-\mu_u(x,y) \leq -t)\leq  e^{- \frac{t^2}{2c^2}} .
$$
Hence, in order to achieve $P_u(E)\leq e^{-\kappa}$, it is sufficient to pick
$$
    t\geq c(x,y)\sqrt{2\kappa}.
$$
Therefore, setting $t = \rho(x,y) + \mu_u(x,y)$, or alternatively, requesting that
\begin{equation}\label{eqn:hoeffding-condition}
    \rho(x,y) \geq -\mu_u(x,y) + \xi_H(x,y)  
\end{equation}
where 
$$
    \xi_H(x,y)  = c(x,y)\sqrt{2\kappa},
$$
demonstrates that $P_u(E)\leq e^{-\kappa}$.

Instead of Hoeffding inequality one can employ Bernstein inequality for bounded random variables. Applying the same logic results in
\begin{equation}\label{eqn:bernstein-condition}
    \rho(x,y) \geq -\mu_u(x,y) + \xi_B(x,y)  
\end{equation}
with
$$
    \xi_B(x,y) = \gamma(x,y) + \sqrt{\gamma(x,y)^2 + 2 \kappa \nu_u(x,y)} 
$$
and $\gamma := \frac{2}{3}c(x,y)\kappa$.

Satisfying either  (~\ref{eqn:hoeffding-condition}) or (~\ref{eqn:bernstein-condition}) implies (~\ref{eqn:uniform-reduction}) and consequently  (~\ref{eqn:pert-misclassifcation-bound}). Hence, taking the minimum,
$$
    \xi(x,y) = \min\left\{\xi_H(x,y), \xi_B(x,y)\right\}
$$
 completes the proof.

\end{proof}

\begin{remark}
Theorem~\ref{thm:boxed_samplewise_pert} can be easily adapted to utilize other concentration inequalities, including the Chebyshev or Cantelli inequalities. These inequalities are more distribution dependent and less conservative, as shown empirically in certification percentage and guarantee lower bounds offered in Section~\ref{sec:robustness-certification-exps}.
\end{remark}

For binary classification problems and any $\mathcal{B}$ centered at $c^\ast\in \mathbb{R}^n$ the value of $\mu_u(x,y)$ could be computed as:
\[
\begin{split}
	\mathbb{E}_{u}(f(x+\delta x,y)) &= \frac{\langle w_y-w_{i \neq y},x\rangle + (b_y-b_{i\neq y})}{\|w_y-w_{i \neq y}\|}\\
    &  + \frac{\langle w_y-w_{i \neq y},c^\ast \delta x\rangle }{\|w_y-w_{i \neq y}\|}=\mu_u(x,y).
\end{split}
\]

To provide sample-wise guarantee, noise sampling is required in inference. Assume for a single fixed example $(x,y)$, the availability of $z$ i.i.d. perturbations based on the uniform product distribution $u$ on $\mathcal{B}$ for calibration. This enables computing empirical mean over  shifts in margins at a fixed $(x,y)$ induced by perturbations from the uniform product distribution: 
$$
\begin{aligned}
	\hat{\mu}_u(x,y) & = \frac{1}{z}\sum_{j=1}^{z} f(x, y, \delta x^{(j)}).
\end{aligned}
$$

\begin{remark}[Certificates for linear modules] \normalfont
\label{remark:1d-case}
Consider a linear classifier (with no bias) for classification with classes $i\in\{0,\cdots, Q\}$,
$$
	M_i(x) = \langle w_i, x\rangle 
$$then we can redefine the pairwise margin for $j\neq y$: 
$$
	\rho_{j}(x,y) = \langle u_{y,j},x \rangle, \text{ with }\; u_{y,j} := \frac{w_y - w_j}{\|w_y - w_j\|}
$$
and then for inputs with perturbations, the perturbations can be projected to 1D: 
$$
Z_{j} := \rho_{j}(x+\delta x,y) - \rho_{j}(x,y) = \langle u_{y,j},\delta x \rangle
$$

For each $j\neq y$ we can define a box $\mathcal{B}_{j}$ (an interval in 1D), where $\mathcal{B_j} = [l_{j},u_{j}]$ with volume $V_\mathcal{B} = u_{j}- l_{j}$ and out-of-box probability of $\epsilon_{\mathcal{B}_j}$. 

Under the uniform product distribution on box $\mathcal{B}_j$ we know explicit true mean $\mu_{u_j} = V_\mathcal{B}/2$  and variance $\nu_{u_j} = V_\mathcal{B}^2/12$ and the boundness constant $c_{j} = (u_{j}- l_{j})/2$ without the need of sampling. From these we can define
$$
	\kappa_j :=\ln\left(\frac{C_j V_{\mathcal{B}_j}}{\phi_j - \epsilon_{\mathcal{B}_j}}\right), \;\; \gamma_j:= \frac{2}{3} c_j\kappa_j
$$
and
$$
\xi_j = \min\left\{c_j\sqrt{2\kappa_j}, \gamma_j + \sqrt{\gamma_j^2 + 2\kappa_j \nu_{u_j}}\right\}
$$

Let Assumption D.1. and D.3 hold true with the parameters in the above setting. Then if for all $j\neq y$, let $\phi_j\in (0,1)$ and  $\phi_j > \epsilon_{\mathcal{B}_j}$ the following condition is satisfied: 
$$
	\rho_j(x,y) \geq -\mu_{u_j} + \xi_j 
$$then the probability that the classification is correct for example with perturbation is lower bounded with $\phi=\sum_{j\neq y} \phi_j$:
$$
P_D(\rho(x+\delta x, y)>0) \geq 1-\phi .
$$
\end{remark}

The per-example guarantee will require per-example sampling of perturbations or information on the perturbation distribution for certification. We can generalize this to population guarantees for samples from a data distribution.

\begin{remark}[Population-level certificates]
\label{remark:population} 

Suppose that the examples are sampled from a distribution  $D'$ and  the perturbation $\delta x\sim D$ is sampled independently of $(x,y)\sim D'$. Furthermore,   
assume the availability of a calibration set $S$ containing $m$ examples sampled i.i.d. from $D'$, where a fraction $s/m$ of the examples is certified in the per-example level.  Then one can  generalize example-level certificates  to the population-level ones. Indeed, let $\psi\in(0,1)$. Then with probability at least $1-\psi$ over the draw of $S$ Hoeffding's inequality implies that the following bound holds:
$$
\begin{aligned}
    P&_{\delta x\sim D, (x,y)\sim D'}(\rho(x+\delta x,y)=1)  \\
    & \geq (1-\phi) \max \left\{0, \frac{s}{m} - \sqrt{\frac{1}{2m}\ln\left(\frac{2}{\psi}\right)}\right\}.
\end{aligned}
$$
\end{remark}

\section{Supplementary details on experiments}
\label{sec:appendix-exp-details}

\subsection{Tasks from the Natural Instructions dataset}
\label{sec:appendix-tasks}

List of individual task indices, names and descriptions from the Natural Instructions \cite{supernaturalinstructions}. These are the results of the task selection protocol of \citet{seleznyov2025punctuation} with additional task filters: (1) a $>1\%$ drop in performance due to perturbations, (2) balanced accuracy (BAC) $\geq70\%$ on original examples from the benchmark (\emph{clean} examples). The additional task filters filters for tasks for which the model is performant on clean examples and is noticeably less accurate under semantically-neutral prompt alternations.

\begin{itemize}
    \item \textbf{Task 065 Time-travel consistent sentence classification - } Choosing the option that makes a given short story consistent
    \item \textbf{Task 069 Abductive NLI classification - } Choosing text that completes a story based on given beginning and ending.
    \item \textbf{Task 1297 QASC question answering - } Given two facts, and a multiple-choice question, answer the question.
    \item \textbf{Task 280 Stereo-set classifcation - } Classify sentences into four kinds of stereotypes, including gender, profession, race, and religion.
    \item \textbf{Task 286 Openbook-QA - } Given a multiple-choice question and you have to pick the correct option.
    \item \textbf{Task 296 Story-cloze correct end classification - } Given four sentences of five sentence story, select correct answer for last (fifth) sentence from the given option.
    \item \textbf{Task 220 ROC-stories title classification - } Given a five sentence story, choose an appropriate title for the story from the given options.
    \item \textbf{Task 158 Count frequency of words - } Count the number of occurrences of a word in the given sentence.
    \item \textbf{Task 322 Jigsaw classification threat - } Given a comment from online platforms, classify whether or not it contains threats.
\end{itemize}

\begin{figure}[h]
    \centering
    \includegraphics[width=\linewidth]{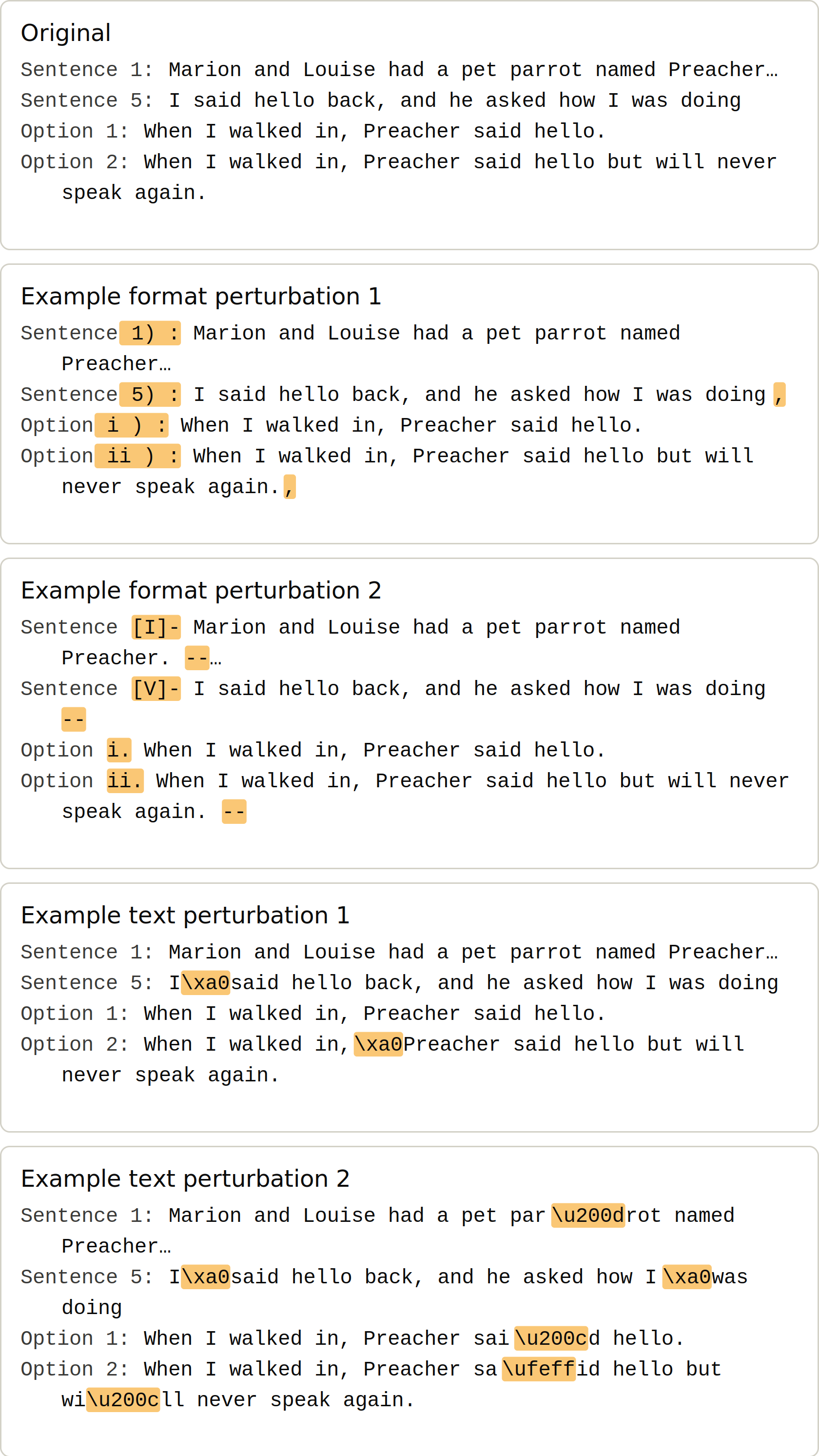}
    \caption{Examples of semantically similar but textually different prompts for a random example from Task 065.}
    \label{fig:ex_perts}
\end{figure}

\subsection{Data splits}
\label{sec:appendix-splits}

For each target task, we sample 200 examples and generate 200 perturbations per example for each perturbation type. This is split into three subsets: 50\% training set, 35\% testing set and an additional 15\% hold-out set. The main body of results presented are based on the measurements on the testing set, while the population-level certificates are measured using the hold-out set.

For adequate split w.r.t. perturbations:

\begin{itemize}
    \item For text perturbations, since perturbations can occur in any location in the string, random text perturbations are applied independently for the examples of each set.
    \item For format perturbations, since perturbations are rigid and in fixed locations in the string, a fixed set of 200 formats was considered. These formats have been split randomly into the   training set (50\%), testing set (35\% ) and hold-out (15\%), similar to the example-level split. For each set, only the formats in the corresponding format split set were applied to ensure no data leakage occurs through the use of same formats in both training and testing.
\end{itemize}

For per-example certification, values of $l_i, u_i$ are estimated empirically with 50\% of perturbed examples and evaluated on the remaining 50\%. For multi-class tasks, each pair of ground-truth vs competitor class are certified separately; an example is certified only if all pairwise certificates pass.

\subsection{Ridge Regression to $f_i$}
\label{sec:appendix-ridge}

For example, we can define 
$$
    b_i(x) = w_i\cdot \psi(x) + \beta_i
$$
where $\psi(x) =\left[x^{\top},\|x\|_2, 1\right]^{\top}\in\mathbb{R}^{(n+2)}$, $\beta_i\in\mathbb{R}$ is a constant, and $w_i\in\mathbb{R}^{(n+2)\times 1}$ (where $n$ is the input dimension $x\in\mathbb{R}^n$). Then we can define the inputs and targets of the ridge regression:
\begin{itemize}
    \item Inputs: For $N$ training prompts and $K$ perturbed variants of each training prompt, we can stack feature mappings $\psi(x)$ into $\Psi\in\mathbb{R}^{NK\times (n+2)}$.
    \item Targets: For each feature mapping, we can find $f_i(x,\delta x) = M_i(x+\delta x) - M_i(x)$ for partial modules $i=1,...,d$, which can also be stacked to target outputs of $\mathbf{F}\in\mathbb{R}^{NK\times d}$
\end{itemize}
The regression problem becomes,
$$
	\min_{W} \|\mathbf{\Psi}W -  \mathbf{F}\|^2 + \lambda\|W\|^2
$$
for $W = [w_1,...,w_d] \in\mathbb{R}^{(n+2)\times d}$, where $\lambda\in(0,1]$ is the regularization constant. We use the closed-form solution of 
$$
	W = \left(\mathbf{\Psi}^\top\mathbf{\Psi} + \lambda I_{n+2}\right)^{-1}\mathbf{\Psi}^\top\mathbf{F}
$$
which gives residuals:
$$
	R = \mathbf{F{}} - \mathbf{\Psi}W
$$
used to define the additional constant:
$$
\beta_i = \frac{1}{2}\left(\max_j R_{j,i} + \min R_{j_i}\right)
$$
where $j$ is index over $NK$ single prompt residuals, and choice of $\beta_i$ is to reduce residuals:
$$
    \min_{\beta_i} \max_j |R_{j,i} - \beta_i|
$$

In the empirical experiments, we solve the problem with Cholesky factorization with adaptive jitter.

\begin{remark}[Linear ridge]\label{remark:linear-ridge} \normalfont
    This previous form of ridge regression is non-linear, but can be easily generalized to the linear case with 
    $$
        \psi(x) =\left[x^{\top},1\right]^{\top}\in\mathbb{R}^{(n+1)}
    $$
    which will allow merging of the weights into the original logit layer. In the empirical experiments, we utilize linear Linear regression.
\end{remark}

\subsection{Generalization to generation tasks}
\label{sec:appendix-generation}

To extend from classification tasks to generation tasks, we can move away from debiasing the logits towards debiasing features in intermediate layers. 

An example approach is to select an intermediate layer, extract $\mathbb{R}^n$ feature vectors of the last token before generation for both clean and perturbed input prompts, compute the differences between clean and perturbed feature vectors for the training set and their mean $\mu$, define $\phi(x)=x-\mu$, for feature vector $x$, (3) compute PCA on the training set comprising $\phi(x)$, (4) select $k$ principal components and perform debiasing on the projections to these $k$ principal components. 

If $H=\left(h_1, \ldots, h_n\right)$ is the matrix of principal components, and $\theta: \mathbb{R}^n \rightarrow \mathbb{R}^k$ is the debiasing function (such as the ridge regression) then intermediate layer debiasing can be achieved as:
$$
    \phi_{\text {debiased }}(x)=H\left(\begin{array}{l}\theta_1\left(H^T \phi(x)\right)+h_1^T \phi(x) \\ \vdots \\ \theta_k\left(H^T \phi(x)\right)+h_k^T \phi(x) \\ h_{k+1}^T \phi(x) \\ \vdots \\ h_n^T \phi(x)\end{array}\right)
$$
where for $k=1$, this reduces the problem precisely to the one we analyzed for input-dependent debiasing. 

An experiment of the above debiasing approach have been run for simple generation (NQ, TriviaQA, GSM-8K ) and summarization (XSum) tasks. For each of these tasks we sampled 275 example subsets and generated 100 text perturbations for each example. The train-test split is  100/175 for examples, and 30/70 for perturbations. We set $k=1$, so that debiasing was applied only to projections on the first principal component, leaving other variables untouched. The results is shown in Table~\ref{tab:generation} to demonstrate this potential generalization.

\section{Supplementary empirical results}
\label{sec:appendix-calibration}

This section presents supplementary empirical results:
\begin{itemize}
    \item Table~\ref{tab:independent-debias-full}-\ref{tab:dependent-debias-metrics-CI} provides supplementary empirical results for input-independent and input-dependent debiasing.
    \item Table~\ref{tab:per-example-rate-full}-\ref{tab:population-guarantee} provides supplementary empirical results for per-example and population level certification, including the effect of input-dependent debiasing on certification.
    \item Table~\ref{tab:clean-alpha-full} provides supplementary empirical examples for controlling the clean-perturbed performance tradeoff.
\end{itemize}

\begin{table}[t]
    \centering
    \caption{The effect of input-independent debiasing on binary tasks with format perturbations (top section) and text perturbations (bottom section). Extended table of Table~\ref{tab:independent-debias}. Here \red{red} and \blue{blue} indicate decrease and increase in performance/robustness due to debiasing, respectively.  Negative $p_{\epsilon V}$ and $p_{\epsilon L}$ indicate expansion of robustness radii.}
    \scriptsize
	\begin{tabular}{llrrr|rr}
	\toprule
	Model    & Task & $\delta_{\text{drop}}$ & $\delta_{\text{clean}}$ & $\delta_{\text{pert}}$ & $p_{\epsilon L}$ & $p_{\epsilon V}$ \\
	\midrule
	Llama 8B & 065  & -20.9 & \red{-10.1} & \blue{11.1} & \blue{75.9} & \blue{53.1} \\
	Llama 8B & 069  & -19.3 &  \red{-8.3} & \blue{12.8} & \blue{68.5} & \blue{67.5} \\
	Llama 8B & 286  &  -4.7 &           0 &  \blue{0.5} & \blue{58.6} & \red{-62.9} \\
	Llama 8B & 296  & -35.1 &          -- &          -- & \blue{60.3} & \red{-83.4} \\
	Qwen 8B  & 065  &  -5.7 &          -- &          -- & \blue{25.7} & \red{-67.4} \\
	Qwen 8B  & 069  & -14.3 &          -- &          -- &  \red{-2.8} & \red{-18.2} \\
	Qwen 8B  & 220  &  -6.3 &  \red{-1.4} &  \red{-1.1} & \blue{51.7} & \blue{14.4} \\
	Qwen 8B  & 296  & -23.3 &          -- &          -- & \blue{11.4} & \red{-16.3} \\
	Olmo 7B  & 065  & -13.4 &          -- &          -- &   \blue{38} & \blue{22.9} \\
	Olmo 7B  & 069  & -13.3 &          -- &          -- & \blue{62.4} &  \red{-4.8} \\
	Olmo 7B  & 220  & -18.9 &          -- &          -- & \blue{15.9} & \red{-39.8} \\
	Olmo 7B  & 296  & -14.6 &          -- &          -- & \blue{16.6} & \red{-13.8} \\
	Olmo 7B  & 322  &  -2.4 &          -- &          -- & \blue{51.5} & \red{-34.4} \\
    \hline
	Qwen 8B  & 065  &  -6.1 &         -- &         -- & \blue{15.6} & \red{-76.9} \\
	Qwen 8B  & 220  &  -1.7 & \red{-2.7} & \blue{0.6} & \blue{76.9} & \red{-92.8} \\
	Llama 8B & 065  & -16.7 &         -- &         -- & \blue{73.8} & \blue{61.1} \\
	Llama 8B & 069  & -13.3 & \blue{0.9} & \blue{6.5} & \blue{68.5} & \blue{45.6} \\
	Llama 8B & 286  & -14.8 &         -- &         -- & \blue{14.5} & \red{-43.1} \\
	Llama 8B & 296  & -19.8 &         -- &         -- & \blue{42.7} & \red{-10.1} \\
	Olmo 7B  & 065  & -21.8 & \blue{2.7} & \blue{7.8} & \blue{35.5} & \red{-11.6} \\
	Olmo 7B  & 069  & -19.6 &         -- &         -- &    \blue{5} & \red{-24.4} \\
	Olmo 7B  & 1284 & -19.9 &         -- &         -- & \blue{59.3} & \blue{32.3} \\
	Olmo 7B  & 322  & -14.8 &         -- &         -- & \blue{30.8} &  \blue{1.5} \\
    \bottomrule
	\end{tabular}
    \label{tab:independent-debias-full}
\end{table}

\begin{table}[t]
    \centering
    \caption{Measuring impact of input-dependent debiasing across target tasks for models with format perturbations (top section) and text perturbations (bottom section). Extended table of Table~\ref{tab:independent-debias}. Here \red{red} and \blue{blue} indicate decrease and increase in performance/robustness due to debiasing, respectively. }
    \scriptsize
	\begin{tabular}{llrrr}
	\toprule
	Model    & Task & $\delta_{\text{drop}}$ & $\delta_{\text{clean}}$ & $\delta_{\text{pert}}$ \\
	\midrule
	Llama 8B & 65   & -20.9 &  \red{-1.9} & \blue{16.9} \\
	Llama 8B & 69   & -19.3 &  \red{-0.8} & \blue{14.4} \\
	Llama 8B & 1297 & -55.0 &  \red{-4.0} & \blue{30.6} \\
	Llama 8B & 280  &  -2.3 &  \red{-0.8} &  \red{-3.9} \\
	Llama 8B & 286  &  -4.7 &  \red{-4.3} &  \blue{0.3} \\
	Llama 8B & 296  & -35.1 & \red{-14.3} & \blue{25.1} \\
	Qwen 8B  & 65   &  -5.7 &  \red{-2.3} &  \blue{5.4} \\
	Qwen 8B  & 69   & -14.3 &  \red{-2.6} & \blue{13.0} \\
	Qwen 8B  & 1297 & -35.6 &           0 & \blue{36.1} \\
	Qwen 8B  & 220  &  -6.3 &           0 &  \blue{6.0} \\
	Qwen 8B  & 296  & -23.3 &  \red{-8.6} & \blue{19.8} \\
	Olmo 7B  & 65   & -13.4 &  \red{-9.6} &  \red{-1.3} \\
	Olmo 7B  & 69   & -13.3 &  \blue{0.5} &  \blue{7.3} \\
	Olmo 7B  & 1297 & -45.3 &           0 & \blue{41.3} \\
	Olmo 7B  & 158  &  -4.0 &  \red{-1.2} &  \blue{3.3} \\
	Olmo 7B  & 220  & -18.9 &           0 & \blue{18.6} \\
	Olmo 7B  & 280  & -11.2 &  \blue{2.7} &  \blue{9.1} \\
	Olmo 7B  & 296  & -14.6 & \red{-17.1} &  \blue{3.4} \\
	Olmo 7B  & 322  &  -2.4 &  \red{-1.5} &  \blue{2.9} \\
    \midrule
	Llama 8B & 65   & -16.7 &  \red{-9.5} & \blue{12.6} \\
	Llama 8B & 69   & -13.3 &  \red{-1.1} & \blue{11.7} \\
	Llama 8B & 1297 & -23.2 & \red{-36.3} &  \blue{2.3} \\
	Llama 8B & 280  & -18.2 & \red{-27.8} & \blue{10.0} \\
	Llama 8B & 286  & -14.8 &           0 & \blue{11.4} \\
	Llama 8B & 296  & -19.8 &  \red{-5.7} & \blue{18.3} \\
	Olmo 7B  & 65   & -21.8 & \red{-12.9} &  \blue{5.7} \\
	Olmo 7B  & 69   & -19.6 & \red{-10.9} &  \blue{2.6} \\
	Olmo 7B  & 1284 & -19.9 &  \red{-4.6} & \blue{14.2} \\
	Olmo 7B  & 1297 & -40.6 & \red{-24.7} & \blue{20.9} \\
	Olmo 7B  & 158  & -10.4 &  \red{-4.5} &  \blue{4.6} \\
	Olmo 7B  & 280  & -34.0 & \red{-26.3} & \blue{14.7} \\
	Olmo 7B  & 322  & -14.8 &  \blue{3.9} & \blue{15.9} \\
	Qwen 8B  & 65   &  -6.1 &  \red{-1.1} &  \blue{4.9} \\
	Qwen 8B  & 1297 &   0.5 &  \red{-1.8} &  \blue{0.1} \\
	Qwen 8B  & 220  &  -1.7 &           0 &  \blue{1.8} \\
	\bottomrule
	\end{tabular}
    \label{tab:dependent-debias-full}
\end{table}

\begin{table}[t]
    \centering
    \caption{Measuring impact of input-dependent debiasing across target tasks for models with format perturbations and model Llama 8B. Task 65, 161, 155 and other tasks demonstrate the four possible scenarios (shown in Figure~\ref{fig:robustness_bias_taxonomy}) due to perturbation-induced bias.}
    \scriptsize
	\begin{tabular}{llrrr}
	\toprule
	Model    & Task & $\delta_{\text{drop}}$ & $\delta_{\text{clean}}$ & $\delta_{\text{pert}}$ \\
	\midrule
	Llama 8B & 65   & -20.9 &  \red{-1.9} & \blue{16.9} \\
	Llama 8B & 69   & -19.3 &  \red{-0.8} & \blue{14.4} \\
	Llama 8B & 1283 &   1.3 &  \red{-6.5} &  \red{-6.2} \\
	Llama 8B & 1284 &   5.1 &  \blue{0.4} &  \red{-4.4} \\
	Llama 8B & 1297 & -55.0 &  \red{-4.0} & \blue{30.6} \\
	Llama 8B & 1347 &  -2.6 &  \blue{3.9} &  \blue{6.2} \\
	Llama 8B & 1419 &  -8.1 &    \red{-1} &  \blue{2.6} \\
	Llama 8B & 1420 &   4.4 &  \blue{8.4} &  \blue{1.5} \\
	Llama 8B & 1421 &  -7.0 &  \red{-6.7} &  \blue{0.5} \\
	Llama 8B & 1423 &  -7.7 & \red{-10.5} &  \blue{0.5} \\
	Llama 8B & 155  &   4.2 &  \blue{2.5} &  \red{-1.2} \\
	Llama 8B & 158  &   0.5 &  \blue{0.7} &  \blue{1.3} \\
	Llama 8B & 161  &  -2.5 &  \red{-6.0} &  \red{-3.7} \\
	Llama 8B & 162  &   1.3 &  \blue{2.7} &  \blue{0.4} \\
	Llama 8B & 163  &  -3.2 & \red{-28.9} & \red{-24.6} \\
	Llama 8B & 1678 &  -0.6 &  \blue{3.9} &  \blue{1.8} \\
	Llama 8B & 220  &  -3.8 &  \red{-6.1} &  \red{-0.6} \\
	Llama 8B & 279  &  -0.6 &  \blue{6.4} &  \blue{4.7} \\
	Llama 8B & 280  &  -2.3 &  \red{-0.8} &  \red{-3.9} \\
	Llama 8B & 286  &  -4.7 &  \red{-4.3} &  \blue{0.3} \\
	Llama 8B & 296  & -35.1 & \red{-14.3} & \blue{25.1} \\
	Llama 8B & 317  &  -3.9 &  \red{-2.0} &  \blue{4.3} \\
	Llama 8B & 322  &   4.7 &  \red{-1.4} &  \red{-6.1} \\
	Llama 8B & 326  &   1.9 &  \blue{1.0} &  \red{-1.9} \\
	\bottomrule
	\end{tabular}
    \label{tab:dependent-debias-full-llama}
\end{table}

\begin{table}[t]
    \centering
    \caption{Measuring impact of input-dependent debiasing across target tasks for models with format perturbations (top section) and text perturbations (bottom section). Here \red{red} and \blue{blue} indicate decrease and increase in performance/robustness due to debiasing, respectively. This table provides the task-level information for the results shown in Table~\ref{tab:dependent-debias-metrics}.}
    \scriptsize
	\begin{tabular}{llrrrrr}
	\toprule
	Model    & Task & $p_{\text{damage}}$ & $p_{\text{recover}}$ & $p_{\text{clean}}$ & $p_{\text{combined}}$ & $p_{\epsilon V}$ \\
	\midrule
	Llama 8B & 65   & 26.0 &  \blue{80.9} &  \red{-2.4} & \blue{27.8} & \blue{81.9} \\
	Llama 8B & 69   & 24.0 &  \blue{74.6} &  \red{-1.0} & \blue{23.0} & \blue{71.9} \\
	Llama 8B & 1297 & 62.2 &  \blue{55.7} &  \red{-4.6} & \blue{88.1} & \blue{78.7} \\
	Llama 8B & 280  &  3.2 & \red{-169.6} &  \red{-1.1} &  \red{-5.5} & \blue{94.6} \\
	Llama 8B & 286  &  6.1 &   \blue{7.4} &  \red{-5.6} &  \blue{0.4} & \blue{94.7} \\
	Llama 8B & 296  & 36.1 &  \blue{71.6} & \red{-14.7} & \blue{39.2} & \blue{80.8} \\
	Qwen 8B  & 65   &  6.6 &  \blue{94.5} &  \red{-2.6} &  \blue{6.5} & \blue{51.5} \\
	Qwen 8B  & 69   & 15.8 &  \blue{90.7} &  \red{-2.9} & \blue{16.7} & \blue{63.9} \\
	Qwen 8B  & 1297 & 36.6 & \blue{101.5} &           0 & \blue{57.4} & \blue{90.2} \\
	Qwen 8B  & 220  &  6.4 &  \blue{95.2} &           0 &  \blue{6.4} & \blue{84.7} \\
	Qwen 8B  & 296  & 24.0 &  \blue{85.0} &  \red{-8.8} & \blue{26.2} & \blue{92.3} \\
	Olmo 7B  & 65   & 14.6 &   \red{-9.7} & \red{-10.5} &  \red{-1.8} & \blue{13.9} \\
	Olmo 7B  & 69   & 15.1 &  \blue{55.3} &  \blue{0.6} &  \blue{9.6} & \blue{77.7} \\
	Olmo 7B  & 1297 & 45.3 &  \blue{91.1} &           0 & \blue{73.7} & \blue{71.8} \\
	Olmo 7B  & 158  &  8.2 &  \blue{82.7} &  \red{-2.4} &  \blue{7.2} & \blue{78.5} \\
	Olmo 7B  & 220  & 19.2 &  \blue{98.0} &           0 & \blue{22.9} & \blue{55.6} \\
	Olmo 7B  & 280  & 12.3 &  \blue{81.3} &  \blue{3.0} & \blue{11.2} & \blue{86.0} \\
	Olmo 7B  & 296  & 15.7 &  \blue{23.6} & \red{-18.5} &  \blue{4.0} & \blue{66.3} \\
	Olmo 7B  & 322  &  3.1 & \blue{120.3} &  \red{-1.9} &  \blue{3.7} & \blue{79.2} \\
    \midrule
	Llama 8B & 65   & 20.8 &  \blue{75.5} & \red{-11.8} & \blue{19.1} & \blue{90.3} \\
	Llama 8B & 69   & 16.5 &  \blue{88.5} &  \red{-1.3} & \blue{17.0} & \blue{87.1} \\
	Llama 8B & 1297 & 26.2 &  \blue{10.0} & \red{-41.0} &  \blue{2.4} & \blue{39.0} \\
	Llama 8B & 280  & 25.0 &  \blue{55.0} & \red{-38.1} & \blue{16.9} & \blue{71.1} \\
	Llama 8B & 286  & 19.2 &  \blue{77.0} &           0 & \blue{17.8} & \blue{88.0} \\
	Llama 8B & 296  & 20.4 &  \blue{92.5} &  \red{-5.9} & \blue{23.0} & \blue{70.2} \\
	Olmo 7B  & 65   & 23.7 &  \blue{26.3} & \red{-14.1} &  \blue{7.6} & \red{-45.9} \\
	Olmo 7B  & 69   & 22.2 &  \blue{13.2} & \red{-12.4} &  \blue{3.4} & \blue{38.6} \\
	Olmo 7B  & 1284 & 24.3 &  \blue{71.6} &  \red{-5.6} & \blue{22.3} & \blue{85.8} \\
	Olmo 7B  & 1297 & 40.6 &  \blue{51.6} & \red{-24.7} & \blue{33.4} & \blue{70.2} \\
	Olmo 7B  & 158  & 21.1 &  \blue{44.0} &  \red{-9.0} & \blue{11.3} & \blue{44.5} \\
	Olmo 7B  & 280  & 37.4 &  \blue{43.2} & \red{-28.9} & \blue{24.2} & \blue{64.3} \\
	Olmo 7B  & 322  & 19.0 & \blue{107.8} &  \blue{5.0} & \blue{24.8} & \blue{65.9} \\
	Qwen 8B  & 65   &  7.1 &  \blue{80.0} &  \red{-1.3} &  \blue{6.0} & \blue{52.8} \\
	Qwen 8B  & 1297 & -0.5 &  \red{-20.8} &  \red{-1.8} &  \blue{0.1} & \blue{90.1} \\
	Qwen 8B  & 220  &  1.7 & \blue{106.5} &           0 &  \blue{1.8} & \blue{91.9} \\
	\bottomrule
	\end{tabular}
    \label{tab:dependent-debias-metrics-full}
\end{table}

\begin{table}[h]
    \centering
    \caption{{Measuring impact of input-dependent debiasing across target tasks for models with format perturbations (top section) and text perturbations (bottom section). Corresponding to Table~\ref{tab:dependent-debias-metrics} in the main text.}}
    \scriptsize
	\begin{tabular}{lrrrrrr}
	\toprule
	  & \multicolumn{2}{c}{\textbf{Llama 8B}} & \multicolumn{2}{c}{\textbf{Olmo 8B}}  & \multicolumn{2}{c}{\textbf{Qwen 7B}} \\
     & Mean &  95\% CI & Mean &  95\% CI & Mean &  95\% CI \\ \cmidrule(lr){2-3} \cmidrule(lr){4-5} \cmidrule(lr){6-7}                   	
    $p_{\text{damage}}$   & 26.2 &   [12.2, 44.1] & 16.7 &  [11.1, 29.4] & 17.9 & [10.0, 29.9]  \\
	$p_{\text{recover}}$  & 20.1 & [-102.5, 65.7] & 67.8 &  [34.7, 91.2] & 93.4 & [88.4, 98.1] \\
	$p_{\text{clean}}$    & -4.9 &  [-10.9, -2.3] & -3.7 & [-10.4, -0.2] & -2.9 & [-7.1, -0.6]  \\
	$p_{\text{combined}}$ & 28.8 &    [8.7, 60.9] & 16.3 &   [6.3, 42.8] & 22.6 & [10.4, 47.2]  \\
	$p_{\epsilon V}$      & 83.8 &   [77.5, 90.4] & 66.1 &  [41.7, 76.4] & 76.5 & [60.6, 89.2]  \\
    \midrule
	$p_{\text{damage}}$   &  21.3 &  [18.8, 24.1] &  26.9 &  [22.0, 34.3] &  2.8 &    [0.3, 7.1] \\
	$p_{\text{recover}}$  &  66.4 &  [34.0, 82.8] &  51.1 &  [33.7, 76.4] & 55.2 & [-20.8, 97.6] \\
	$p_{\text{clean}}$    & -16.4 & [-33.7, -5.1] & -12.8 & [-20.8, -4.9] & -1.1 &   [-1.7, 0.0] \\
	$p_{\text{combined}}$ &    16 &   [8.3, 19.6] &  18.1 &  [10.4, 25.2] &  2.6 &    [0.7, 6.0] \\
	$p_{\epsilon V}$      &  74.3 &  [54.6, 85.3] &  46.2 &   [0.7, 66.7] & 78.3 &  [52.8, 91.3] \\
	\bottomrule
	\end{tabular}
    \label{tab:dependent-debias-metrics-CI}
\end{table}

\begin{table}[t]
    \centering
    \caption{Percentage of per-example of certification with/without input-dependent debiasing for target tasks with format perturbations (top section) and text perturbations (bottom section). Primed metrics are measured after debiasing. This table provides the task-level information for the per-example certification results shown in Table~\ref{tab:certification-combined} and Figure~\ref{fig:cert_histogram_masked}.}
    \scriptsize
    \begin{tabular}{llrrrr}
    \toprule
    Model    & Task & $p_{C}$ & $p_{H}$ & $p_{C}'$ & $p_{H}'$ \\
    \midrule
	Llama 8B & 65   &    12.9 &       0 &      8.6 &        0 \\
	Llama 8B & 69   &       0 &       0 &      4.3 &        0 \\
	Llama 8B & 1297 &       0 &       0 &        0 &        0 \\
	Llama 8B & 280  &       0 &       0 &     37.1 &     15.7 \\
	Llama 8B & 286  &    15.7 &       0 &     51.4 &     25.7 \\
	Llama 8B & 296  &    18.6 &       0 &      2.9 &        0 \\
	Qwen 8B  & 65   &    27.1 &       0 &     41.4 &      4.3 \\
	Qwen 8B  & 69   &     5.7 &       0 &     72.9 &     18.6 \\
	Qwen 8B  & 1297 &       0 &       0 &      2.9 &        0 \\
	Qwen 8B  & 220  &       0 &       0 &     94.3 &     75.7 \\
	Qwen 8B  & 296  &    14.3 &       0 &     67.1 &     38.6 \\
	Olmo 7B  & 65   &     2.9 &       0 &      7.1 &        0 \\
	Olmo 7B  & 69   &       0 &       0 &      7.1 &        0 \\
	Olmo 7B  & 1297 &       0 &       0 &        0 &        0 \\
	Olmo 7B  & 158  &    32.9 &      30 &     52.9 &     24.3 \\
	Olmo 7B  & 220  &     2.9 &       0 &     85.7 &     27.1 \\
	Olmo 7B  & 280  &       0 &       0 &     72.9 &     38.6 \\
	Olmo 7B  & 296  &       0 &       0 &      8.6 &        0 \\
	Olmo 7B  & 322  &    32.9 &     1.4 &     57.1 &     45.7 \\
    \midrule
	Llama 8B & 65   &    27.1 &       0 &       20 &      1.4 \\
	Llama 8B & 69   &    35.7 &     5.7 &     31.4 &      5.7 \\
	Llama 8B & 1297 &       0 &       0 &        0 &        0 \\
	Llama 8B & 280  &       0 &       0 &      7.1 &        0 \\
	Llama 8B & 286  &    14.3 &     4.3 &     47.1 &     17.1 \\
	Llama 8B & 296  &    47.1 &    18.6 &     71.4 &     12.9 \\
	Olmo 7B  & 65   &       0 &       0 &      8.6 &        0 \\
	Olmo 7B  & 69   &    28.6 &     8.6 &       10 &        0 \\
	Olmo 7B  & 1284 &    27.1 &     8.6 &     32.9 &      4.3 \\
	Olmo 7B  & 1297 &       0 &       0 &        0 &        0 \\
	Olmo 7B  & 158  &       0 &       0 &      2.9 &        0 \\
	Olmo 7B  & 280  &       0 &       0 &      4.3 &        0 \\
	Olmo 7B  & 322  &     5.7 &     1.4 &     35.7 &      7.1 \\
	Qwen 8B  & 65   &    45.7 &    15.7 &     75.7 &     21.4 \\
	Qwen 8B  & 1297 &     1.4 &       0 &     22.9 &       10 \\
	Qwen 8B  & 220  &      50 &    21.4 &     92.9 &     61.4 \\
    \bottomrule
    \end{tabular}
    \label{tab:per-example-rate-full}
\end{table}

\begin{table}[t]
    \centering
    \caption{Empirical verification of per-example certification: performance (BAC) on perturbed variants of certified clean examples / uncertified clean examples. Primed metrics are measured after debiasing. See Figure~\ref{fig:example_vert_histogram_masked} for visualization of these results across models and tasks.}
    \tiny
    \begin{tabular}{llrrrr}
	\toprule
	Model    & Task &    BAC$_{C}$ & BAC$_{H}$ &   BAC$_{C}'$ &   BAC$_{H}'$ \\
	\midrule
	Llama 8B & 65   &  99.7 / 58.9 &    -  / 59.1 &  98.9 / 73.2 &    -  / 75.3 \\
	Llama 8B & 69   &    -  / 60.4 &    -  / 60.4 & 100.0 / 75.2 &    -  / 75.9 \\
	Llama 8B & 1297 &    -  / 30.3 &    -  / 30.3 &    -  / 66.0 &    -  / 66.0 \\
	Llama 8B & 280  &    -  / 69.7 &    -  / 69.7 & 100.0 / 34.3 & 100.0 / 64.8 \\
	Llama 8B & 286  & 100.0 / 65.2 &    -  / 72.5 & 100.0 / 47.7 & 100.0 / 65.5 \\
	Llama 8B & 296  & 100.0 / 59.7 &    -  / 59.8 &  97.1 / 86.8 &    -  / 87.1 \\
	Qwen 8B  & 65   & 100.0 / 75.1 &    -  / 77.4 &  99.9 / 75.0 & 100.0 / 85.2 \\
	Qwen 8B  & 69   & 100.0 / 79.7 &    -  / 79.8 &  99.9 / 63.0 & 100.0 / 86.8 \\
	Qwen 8B  & 1297 &    -  / 61.5 &    -  / 61.5 & 100.0 / 97.8 &    -  / 97.8 \\
	Qwen 8B  & 220  &    -  / 92.0 &    -  / 92.0 & 100.0 / 67.9 & 100.0 / 92.4 \\
	Qwen 8B  & 296  & 100.0 / 70.2 &    -  / 70.9 & 100.0 / 90.1 & 100.0 / 93.4 \\
	Olmo 7B  & 65   & 100.0 / 76.7 &    -  / 76.9 &  91.9 / 73.4 &    -  / 74.8 \\
	Olmo 7B  & 69   &    -  / 74.8 &    -  / 74.8 & 100.0 / 81.1 &    -  / 82.3 \\
	Olmo 7B  & 1297 &    -  / 53.3 &    -  / 53.3 &    -  / 95.6 &    -  / 95.6 \\
	Olmo 7B  & 158  & 100.0 / 44.6 & 100.0 / 44.6 & 100.0 / 38.4 & 100.0 / 47.3 \\
	Olmo 7B  & 220  & 100.0 / 84.2 &    -  / 84.3 &  99.5 / 85.8 & 100.0 / 96.7 \\
	Olmo 7B  & 280  &    -  / 78.3 &    -  / 78.3 & 100.0 / 83.0 & 100.0 / 87.5 \\
	Olmo 7B  & 296  &    -  / 79.8 &    -  / 79.8 &  97.1 / 80.7 &    -  / 81.3 \\
	Olmo 7B  & 322  & 100.0 / 70.3 & 100.0 / 76.8 & 100.0 / 68.4 & 100.0 / 73.7 \\
    \midrule
	Llama 8B & 65   &  98.4 / 57.2 &    -  / 61.9 &  99.8 / 70.1 & 100.0 / 75.5 \\
	Llama 8B & 69   &  99.7 / 63.1 & 100.0 / 67.4 & 100.0 / 69.9 & 100.0 / 77.3 \\
	Llama 8B & 1297 &    -  / 64.7 &    -  / 64.7 &    -  / 67.2 &    -  / 67.2 \\
	Llama 8B & 280  &    -  / 54.1 &    -  / 54.1 & 100.0 / 66.2 &    -  / 66.2 \\
	Llama 8B & 286  &  99.6 / 56.5 & 100.0 / 60.2 & 100.0 / 54.5 & 100.0 / 68.9 \\
	Llama 8B & 296  & 100.0 / 75.7 & 100.0 / 75.7 &  99.7 / 84.6 & 100.0 / 94.6 \\
	Olmo 7B  & 65   &    -  / 69.3 &    -  / 69.3 &  96.6 / 73.7 &    -  / 75.7 \\
	Olmo 7B  & 69   &  98.7 / 66.3 &  98.7 / 68.0 &  98.5 / 69.7 &    -  / 72.2 \\
	Olmo 7B  & 1284 &  99.0 / 56.2 &  98.7 / 60.0 &  99.5 / 61.1 & 100.0 / 74.2 \\
	Olmo 7B  & 1297 &    -  / 59.9 &    -  / 59.9 &    -  / 83.8 &    -  / 83.8 \\
	Olmo 7B  & 158  &    -  / 39.9 &    -  / 39.9 & 100.0 / 43.2 &    -  / 43.4 \\
	Olmo 7B  & 280  &    -  / 56.0 &    -  / 56.0 & 100.0 / 71.5 &    -  / 71.6 \\
	Olmo 7B  & 322  &  99.0 / 61.6 & 100.0 / 62.5 & 100.0 / 75.2 & 100.0 / 78.4 \\
	Qwen 8B  & 65   & 100.0 / 74.1 & 100.0 / 81.9 &  99.4 / 31.9 & 100.0 / 80.6 \\
	Qwen 8B  & 1297 & 100.0 / 97.5 &    -  / 97.5 & 100.0 / 96.0 & 100.0 / 97.5 \\
	Qwen 8B  & 220  & 100.0 / 94.4 & 100.0 / 96.1 &  99.9 / 61.1 & 100.0 / 86.1 \\
	\bottomrule
	\end{tabular}
    \label{tab:per-example-cert-nocert-performance-full}
\end{table}

\begin{table}[t]
    \centering
    \caption{Population-level guarantees with/without input-dependent debiasing for target tasks with format perturbations (top section) text perturbations (bottom section). Primed metrics are measured after debiasing. This table provides the task-level information for the per-example certification results shown in Table~\ref{tab:certification-combined} and Figure~\ref{fig:cert_histogram_masked}.}
    \tiny
    \begin{tabular}{ll|rrr|rrr}
    	\toprule
    	Model    & Task & $P_C$ & $P_H$ & BAC$_C$ & $P_{C}'$ & $P_{H}'$ & BAC$_C'$ \\
    	\midrule
    	Llama 8B & 65   &     0 &     0 &    59.0 &        0 &        0 &    69.9 \\
    	Llama 8B & 69   &     0 &     0 &    59.8 &        0 &        0 &    75.2 \\
    	Llama 8B & 1297 &     0 &     0 &    28.1 &        0 &        0 &    72.8 \\
    	Llama 8B & 280  &     0 &     0 &    73.4 &     18.8 &        0 &    74.3 \\
    	Llama 8B & 286  &     0 &     0 &    56.2 &     31.7 &      8.5 &    56.2 \\
    	Llama 8B & 296  &   2.1 &     0 &    61.4 &        0 &        0 &    95.7 \\
    	Qwen 8B  & 65   &   9.8 &     0 &    80.6 &     22.7 &        0 &    86.3 \\
    	Qwen 8B  & 69   &     0 &     0 &    66.8 &     51.0 &      2.1 &    96.2 \\
    	Qwen 8B  & 1297 &     0 &     0 &    52.5 &        0 &        0 &    91.8 \\
    	Qwen 8B  & 220  &     0 &     0 &    92.4 &     70.2 &     53.5 &     100 \\
    	Qwen 8B  & 296  &     0 &     0 &    66.1 &     45.8 &     20.1 &    96.7 \\
    	Olmo 7B  & 65   &     0 &     0 &    78.3 &        0 &        0 &    77.5 \\
    	Olmo 7B  & 69   &     0 &     0 &    79.5 &        0 &        0 &    89.3 \\
    	Olmo 7B  & 1297 &     0 &     0 &    42.5 &        0 &        0 &    93.3 \\
    	Olmo 7B  & 158  &  15.0 &  12.4 &    62.0 &     33.0 &      7.2 &    62.4 \\
    	Olmo 7B  & 220  &     0 &     0 &    76.5 &     62.5 &      9.8 &    99.8 \\
    	Olmo 7B  & 280  &     0 &     0 &    88.1 &     51.0 &     20.1 &    93.9 \\
    	Olmo 7B  & 296  &     0 &     0 &    85.2 &        0 &        0 &      86 \\
    	Olmo 7B  & 322  &  15.0 &     0 &    74.0 &     36.8 &     26.5 &      90 \\
        \midrule
    	Llama 8B & 65   &   9.8 &     0 &    54.7 &      3.4 &        0 &    60.0 \\
    	Llama 8B & 69   &  17.5 &     0 &    62.8 &     13.7 &        0 &    69.7 \\
    	Llama 8B & 1297 &     0 &     0 &    59.0 &        0 &        0 &    73.6 \\
    	Llama 8B & 280  &     0 &     0 &    57.3 &        0 &        0 &    69.9 \\
    	Llama 8B & 286  &     0 &     0 &    56.6 &     27.8 &      0.8 &    53.0 \\
    	Llama 8B & 296  &  27.8 &   2.1 &    74.5 &     49.7 &        0 &    92.5 \\
    	Olmo 7B  & 65   &     0 &     0 &    65.7 &        0 &        0 &    64.9 \\
    	Olmo 7B  & 69   &  11.1 &     0 &    73.4 &        0 &        0 &    78.3 \\
    	Olmo 7B  & 1284 &   9.8 &     0 &    63.7 &     15.0 &        0 &    88.3 \\
    	Olmo 7B  & 1297 &     0 &     0 &    53.4 &        0 &        0 &    79.9 \\
    	Olmo 7B  & 158  &     0 &     0 &    46.9 &        0 &        0 &    50.8 \\
    	Olmo 7B  & 280  &     0 &     0 &    53.8 &        0 &        0 &    70.3 \\
    	Olmo 7B  & 322  &     0 &     0 &    63.3 &     17.5 &        0 &    87.5 \\
    	Qwen 8B  & 65   &  26.5 &     0 &    76.1 &     53.5 &      4.7 &    82.3 \\
    	Qwen 8B  & 1297 &     0 &     0 &    92.4 &      6.0 &        0 &    90.4 \\
    	Qwen 8B  & 220  &  30.4 &   4.7 &    95.2 &     69.0 &     40.7 &     100 \\
    	\bottomrule
	\end{tabular}
    \label{tab:population-guarantee-full}
\end{table}

\begin{table}[t]
    \centering
    \caption{Per-example percentage of certification with/without input-dependent debiasing for target tasks with format perturbations (top section) and text perturbations (bottom section). See Table~\ref{tab:per-example-rate-full} for task-wise results.}
    \scriptsize
    \begin{tabular}{lrrrrrr}
	\toprule
	    & \multicolumn{2}{c}{\textbf{Llama 8B}}      & \multicolumn{2}{c}{\textbf{Olmo 7B}}      & \multicolumn{2}{c}{\textbf{Qwen 8B}}       \\
     & Mean & Std & Mean & Std & Mean & Std \\ \cmidrule(lr){2-3} \cmidrule(lr){4-5} \cmidrule(lr){6-7}                    
	$p_{C}$  &  7.9 &  8.8 &  8.9 & 14.8 &  9.4 & 11.5 \\
	$p_{H}$  &    0 &    0 &  3.9 & 10.5 &    0 &    0 \\
	$p_{C}'$ & 17.4 & 21.5 & 36.4 & 34.4 & 55.7 &   35 \\
	$p_{H}'$ &  6.9 & 11.2 &   17 & 19.3 & 27.4 & 30.9 \\
    \midrule
	$p_{C}$  & 20.7 & 19.3 &  8.8 & 13.2 & 32.4 & 26.9 \\
	$p_{H}$  &  4.8 &  7.2 &  2.7 &  4.1 & 12.4 & 11.1 \\
	$p_{C}'$ & 29.5 & 26.6 & 13.5 & 14.6 & 63.8 & 36.5 \\
	$p_{H}'$ &  6.2 &  7.3 &  1.6 &  2.9 &   31 &   27 \\
	\bottomrule
	\end{tabular}
    \label{tab:per-example-rate}
\end{table}

\begin{table}[h]
    \centering
    \caption{{Certification metrics at both the per-example and population levels with/without input-dependent debiasing for target tasks with format perturbations (top section) and text perturbations (bottom section). $p_C$, $p_H$ are per-example certification rates; $P_C$ and $P_H$ are population-level certification rates. Primed metrics are measured after debiasing. Corresponding to Table~\ref{tab:certification-combined} in the main text.}}
    \scriptsize
    \begin{tabular}{lrrrrrr}
    \toprule
    & \multicolumn{2}{c}{Llama 8B} & \multicolumn{2}{c}{Olmo 7B} & \multicolumn{2}{c}{Qwen 8B} \\
    \cmidrule(lr){2-3} \cmidrule(lr){4-5}  \cmidrule(lr){6-7}
	Metric   & Mean & 95\% CI & Mean & 95\% CI & Mean & 95\% CI \\
	\midrule
	$p_{C}$  &  7.9 & [2.1, 14.3] &  8.9 &  [1.1, 21.2] &  9.4 &  [2.3, 21.7] \\
	$p_{H}$  &    0 &  -          &  3.9 &  [0.0, 18.8] &    0 &   -           \\
	$p_{C}'$ & 17.4 & [4.0, 37.1] & 36.4 & [14.6, 58.8] & 55.7 & [21.1, 78.3]  \\
	$p_{H}'$ &  6.9 & [0.0, 17.4] &   17 &  [5.7, 30.0] & 27.4 &  [8.3, 56.9]  \\
	$P_C$    &  0.4 &  [0.0, 1.4] &  3.7 &   [0.0, 9.4] &    2 &   [0.0, 7.9]  \\
	$P_H$    &    0 &  -          &  1.5 &   [0.0, 6.2] &    0 &   -           \\
	$P_{C}'$ &  8.4 & [0.0, 22.1] & 22.9 &  [7.8, 41.7] & 37.9 & [14.7, 57.6]  \\
	$P_{H}'$ &  1.4 &  [0.0, 5.7] &    8 &  [2.5, 16.6] & 15.1 &  [1.3, 42.8]  \\
    \midrule
	$p_{C}$  & 20.7 &  [7.9, 35.5] &  8.8 & [0.8, 20.4] & 32.4 &  [1.4, 48.6] \\
	$p_{H}$  &  4.8 &  [1.0, 13.1] &  2.7 &  [0.2, 6.3] & 12.4 &  [0.0, 19.5] \\
	$p_{C}'$ & 29.5 & [12.9, 52.1] & 13.5 & [5.1, 26.3] & 63.8 & [22.9, 87.1] \\
	$p_{H}'$ &  6.2 &  [1.7, 12.6] &  1.6 &  [0.0, 4.3] &   31 & [13.8, 61.4] \\
	$P_C$    &  9.2 &  [1.6, 18.8] &    3 &  [0.0, 7.6] &   19 &   [0.0, 29.1] \\
	$P_H$    &  0.4 &   [0.0, 1.4] &    0 &   -         &  1.6 &    [0.0, 4.7] \\
	$P_{C}'$ & 15.8 &  [4.6, 34.7] &  4.6 & [0.0, 11.8] & 42.8 &   [6.0, 63.8] \\
	$P_{H}'$ &  0.1 &   [0.0, 0.5] &    0 &   -         & 15.1 &   [1.6, 40.7] \\
	\bottomrule
	\end{tabular}
    \label{tab:certification-combined-CI}
\end{table}

\begin{table}[t]
    \centering
    \caption{Population-level guarantees with/without input-dependent debiasing for target tasks with format perturbations (top section ) and text perturbations (bottom section). See Table~\ref{tab:population-guarantee-full} for task-wise results.}
    \scriptsize
	\begin{tabular}{lrrrrrr}
	\toprule
	   & \multicolumn{2}{c}{\textbf{Llama 8B}}      & \multicolumn{2}{c}{\textbf{Olmo 7B}}      & \multicolumn{2}{c}{\textbf{Qwen 8B}}      \\
     & Mean & Std & Mean & Std & Mean & Std \\ \cmidrule(lr){2-3} \cmidrule(lr){4-5} \cmidrule(lr){6-7}                    
	$P_C$    &  0.4 &  0.9 &  3.7 &  6.9 &    2 &  4.4 \\
	$P_H$    &    0 &    0 &  1.5 &  4.4 &    0 &    0 \\
	$P_{C'}$ &  8.4 & 13.7 & 22.9 & 26.1 & 37.9 & 27.1 \\
	$P_{H'}$ &  1.4 &  3.5 &    8 & 10.4 & 15.1 & 23.1 \\
    \midrule
	$P_C$    &  9.2 & 11.6 &    3 &  5.1 &   19 & 16.5 \\
	$P_H$    &  0.4 &  0.9 &    0 &    0 &  1.6 &  2.7 \\
	$P_{C}'$ & 15.8 & 19.7 &  4.6 &    8 & 42.8 & 32.8 \\
	$P_{H}'$ &  0.1 &  0.3 &    0 &    0 & 15.1 & 22.3 \\
	\bottomrule
	\end{tabular}
    \label{tab:population-guarantee}
\end{table}

\begin{table}[h]
    \centering
    \small
    \begin{tabular}{lrrrr}
    \toprule
    $\alpha$ & $\delta_{\text{clean}}$ & $\delta_{\text{pert}}$ & $p_C$ & $p_H$ \\
    \midrule
    \multicolumn{5}{c}{\textit{Task 065}} \\
    \midrule
    \textbf{base} & \textbf{0.0} & 0.0 & 27.1 & 0.0 \\
    -0.1       & \textbf{0.0} & 4.5 & 61.4 & 17.1 \\
    -0.05      & \textbf{0.0} & 6.2 & 58.6 & 11.4 \\
    -0.01      & -1.4 & 5.8 & 41.4 & 4.3 \\
    -0.005     & -5.6 & 5.8 & 41.4 & 4.3 \\
    \textbf{0} & -2.8 & 5.8 & 41.4 & 4.3 \\
    0.005      & -2.8 & 5.8 & 41.4 & 4.3 \\
    0.01       & -2.8 & 5.8 & 41.4 & 5.7 \\
    0.05       & -2.8 & 5.9 & 41.4 & 4.3 \\
    0.1        & -2.8 & 5.9 & 41.4 & 5.7 \\
    \midrule
    \multicolumn{5}{c}{\textit{Task 069}} \\
    \midrule
    \textbf{base} & \textbf{0.0} & 0.0 & 5.7 & 0.0 \\
    -0.1       & -3.8 & 29.3 & 71.4 & 15.7 \\
    -0.05      & -3.8 & 29.3 & 71.4 & 21.4 \\
    -0.01      & -3.8 & 29.3 & 72.9 & 18.6 \\
    -0.005     & -3.8 & 29.3 & 72.9 & 18.6 \\
    \textbf{0} & -3.8 & 29.3 & 72.9 & 18.6 \\
    0.005      & -3.8 & 29.3 & 72.9 & 17.1 \\
    0.01       & -3.8 & 29.3 & 72.9 & 17.1 \\
    0.05       & \textbf{0.0} & 29.3 & 72.9 & 17.1 \\
    0.1        & \textbf{0.0} & 29.3 & 72.9 & 17.1 \\
    \midrule
    \multicolumn{5}{c}{\textit{Task 1297}} \\
    \midrule
    \textbf{base} & \textbf{0.0} & 0.0 & 0.0 & 0.0 \\
    -0.1       & \textbf{2.5} & 39.1 & 0.0 & 0.0 \\
    -0.05      & \textbf{2.5} & 39.4 & 10.0 & 1.4 \\
    -0.01      & \textbf{2.5} & 39.2 & 2.9 & 0.0 \\
    -0.005     & \textbf{2.5} & 39.2 & 2.9 & 0.0 \\
    \textbf{0} & \textbf{2.5} & 39.2 & 2.9 & 0.0 \\
    0.005      & \textbf{2.5} & 39.2 & 2.9 & 0.0 \\
    0.01       & \textbf{2.5} & 39.2 & 2.9 & 0.0 \\
    0.05       & \textbf{2.5} & 39.2 & 2.9 & 0.0 \\
    0.1        & \textbf{2.5} & 39.2 & 2.9 & 0.0 \\
    \midrule
    \multicolumn{5}{c}{\textit{Task 296}} \\
    \midrule
    \textbf{base} & \textbf{0.0} & 0.0 & 14.3 & 0.0 \\
    -0.1       & \textbf{6.7} & 29.2 & 61.4 & 32.9 \\
    -0.05      & \textbf{10.0} & 30.6 & 70.0 & 34.3 \\
    -0.01      & -33.3 & 30.8 & 65.7 & 40.0 \\
    -0.005     & -10.0 & 30.6 & 67.1 & 40.0 \\
    \textbf{0} & \textbf{3.3} & 30.6 & 67.1 & 38.6 \\
    0.005      & \textbf{3.3} & 30.6 & 67.1 & 37.1 \\
    0.01       & \textbf{3.3} & 30.6 & 67.1 & 37.1 \\
    0.05       & \textbf{6.7} & 30.6 & 67.1 & 40.0 \\
    0.1        & \textbf{6.7} & 30.6 & 67.1 & 40.0 \\
    \bottomrule
    \end{tabular}
    \caption{{Effect of Gram penalty for clean sample. $\alpha$ is the parameter controlling the magnitude of the Gram penalty, enabling control over the clean-perturbed trade-off. Bold number indicate positive or null impact on clean sample performance. Measured for format perturbations on hold-out set.}}
    \label{tab:clean-alpha-full}
\end{table}

\end{document}